\documentclass[journal,draftclsnofoot,onecolumn]{IEEEtran}
\usepackage{amsmath,amssymb,amsthm}
\usepackage{algorithm}
\usepackage{algpseudocode}
\usepackage{hyperref}
\usepackage{dsfont}
\usepackage{graphicx}
\usepackage{booktabs}
\usepackage{multirow}
\usepackage[table]{xcolor}
\usepackage{stfloats}
\usepackage{float}
\allowdisplaybreaks

\newtheorem{lemma}{Lemma}

\newtheorem{assumption}{Assumption}
\newtheorem{definition}{Definition}
\newtheorem{proposition}{Proposition}

\theoremstyle{remark}

\newcommand{\bX}{\mathbf{X}}
\newcommand{\bt}{\mathbf{t}}

\newcommand{\Ind}{\mathds{1}}

\title{Conformal Changepoint Localization  and Root Cause Analysis with Corrupted Observations}
\author{
Seunghun Yu,
Meiyi Zhu,
Petar Popovski,~\IEEEmembership{Fellow,~IEEE},
Joonhyuk Kang,~\IEEEmembership{Member,~IEEE},\\
and Osvaldo Simeone,~\IEEEmembership{Fellow,~IEEE}

\thanks{This work was partly supported by the Institute of Information \& Communications Technology Planning \& Evaluation (IITP)-ITRC (Information Technology Research Center) grant funded by the Korea government (MSIT) (IITP-2026-RS-2020-II201787, contribution rate: 50\%); in part by the Institute of Information \& Communications Technology Planning \& Evaluation (IITP) under 6G·Cloud Research and Education Open Hub grant funded by the Korea government (MSIT) (IITP-2026-RS-2024-00428780, contribution rate: 50\%).
The work of M. Zhu and O. Simeone was supported by an Open Fellowship of the EPSRC (EP/W024101/1). The work of O. Simeone was also supported by the EPSRC (EP/X011852/1) and the ERC (No. 101198347).
The work of P. Popovski was supported, in part, by the Velux Foundation, Denmark, through the Villum Investigator Grant WATER, nr. 37793.
\\(\textit{Corresponding authors: Joonhyuk Kang and Osvaldo Simeone.})
}

\thanks{Seunghun Yu and Joonhyuk Kang are with the Department of Electrical Engineering, Korea Advanced Institute of Science and Technology, Daejeon 34141, South Korea (e-mail: sh0703.yu@kaist.ac.kr; jhkang@ee.kaist.ac.kr).} 
\thanks{Meiyi Zhu is with the Department of Engineering, King's College London, WC2R 2LS, London, U.K. (e-mail: meiyi.1.zhu@kcl.ac.uk).

Petar Popovski is with the Department of Electronic Systems, Aalborg University, 9220 Aalborg, Denmark (e-mail: petarp@es.aau.dk).

Osvaldo Simeone is with the Institute for Intelligent Networked Systems, Northeastern University London, E1 8PH London, U.K., and also with the Connectivity Section, Department of Electronic Systems, Aalborg
University, 9220 Aalborg, Denmark (e-mail: o.simeone@northeastern.edu).}
}

\begin{document}
% \bstctlcite{BSTcontrol}

\maketitle

\begin{abstract}
   Detecting {when} the statistical behavior of an engineered system changes, and identifying {which} component is responsible, are core problems in the monitoring of telecommunication networks, robotic platforms, security infrastructure, and multi-agent systems. In safety- and mission-critical deployments, such decisions must be accompanied by statistical reliability guarantees rather than by point estimates alone. Conformal changepoint localization (CONCH) and conformal root cause analysis (CROC) meet this need by returning confidence sets that contain the true changepoint, or the true root-cause stream, with a user-specified probability, without parametric assumptions on the data-generating process. In practice, however, observations are frequently corrupted, e.g., by outliers, sensor faults, or adversarial perturbations. While the finite-sample coverage of these procedures is preserved under contamination, the resulting confidence sets can become uninformatively large. Adopting a Huber-type contamination model, this paper proposes weighted CONCH (W-CONCH) and weighted CROC (W-CROC), which downweight observations that are likely to be corrupted with the goal of reducing confidence set size when data may be corrupted. The weighting mechanism, derived from a formal bound on the unknown corrupted data densities, leverages pre-existing second-order classifier-based uncertainty signals, such as those produced by evidential deep learning or Bayesian learning. W-CONCH and W-CROC are further generalized by introducing a meta-learning procedure for the weights that optimizes a differentiable surrogate of the confidence set size. Experiments on image-based and real-world changepoint and root-cause benchmarks show that uncertainty-based weighting substantially reduces confidence set size while maintaining the target coverage.
\end{abstract}

\begin{IEEEkeywords}
Changepoint localization, root cause analysis, Huber contamination, uncertainty quantification, meta-learning.
\end{IEEEkeywords}

\section{Introduction}
\IEEEPARstart{C}{hangepoint} analysis provides a principled framework for identifying structural changes in ordered data (see Fig. \ref{fig:data_model1})~\cite{truong2020selective,aminikhanghahi2017survey}, while its multi-stream extension, root cause analysis, seeks to attribute an observed change to the component that originated it (see Fig. \ref{fig:data_model2})~\cite{sole2017survey, soldani2022anomaly}.
This paper studies both problems under two key requirements for safety-critical applications: the outputs must carry statistical reliability guarantees, and they must remain informative even when the observations are corrupted.

\subsection{Context and Motivation}\label{subsec:context}
Determining \emph{when} a system's behavior changes, and \emph{which} component is the root cause, is a recurring and often safety-critical task across engineering. In \emph{telecommunications}, abrupt shifts in traffic volume or flow statistics signal congestion, equipment failures, or intrusions, and detecting and localizing these shifts underpins network management, security monitoring, and the lifecycle of AI-based applications~\cite{aminikhanghahi2017survey,levyleduc2009detection,polese2023understanding,simeone2026conformal}.
In \emph{cybersecurity}, a large class of intrusion-detection tasks is naturally cast as changepoint detection, where a denial-of-service attack or a compromised host manifests as an abrupt change in packet-rate or connection statistics~\cite{tartakovsky2006novel}. In \emph{robotics}, fault detection and isolation must determine both the onset time of a sensor or actuator fault and the specific faulty component, so that a controller can react before the fault propagates through the platform~\cite{khalastchi2018fault}. In \emph{multi-agent and distributed systems}, from robot swarms to microservice architectures, a fault in a single agent or service can cascade through the collective, and the diagnostic problem is to detect the anomaly and trace it back to the agent or service that triggered it~\cite{soldani2022anomaly,zou2017nonparametric,cohen2015asymptotically}. In all of these domains, the \emph{changepoint} time is a proxy for the onset of a fault, and the \emph{root cause} is the component whose change occurred first.

\begin{figure}[t]
    \centering
    \includegraphics[width=\linewidth]{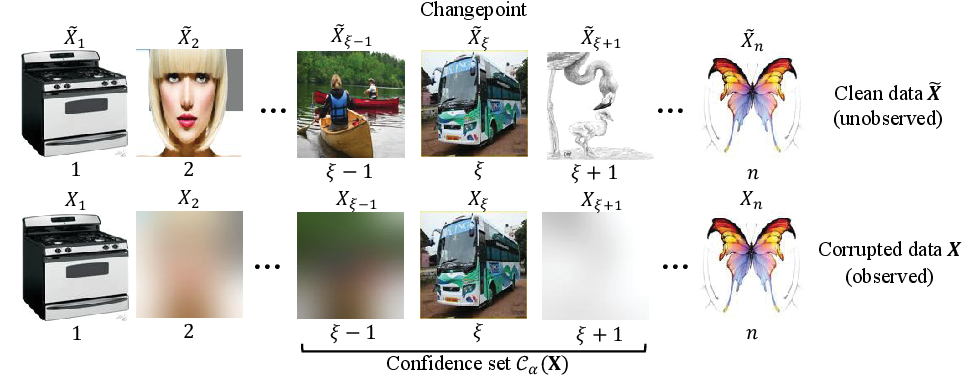}
    \caption{
    Illustration of the offline changepoint localization problem with contaminated data.
    The clean sequence $\widetilde{\mathbf X}$ changes at the true changepoint $\xi$,
    switching from one image style to another. 
    The clean sequence $\widetilde{\bX}$ is, however, not observable, as the changepoint detection has access only to the observed sequence $\mathbf X$, which is obtained by independently contaminating observations with unknown probability $\varepsilon$.
    The goal is to construct a confidence set $\mathcal C_\alpha(\mathbf X)$ that includes the true changepoint $\xi$ with probability no smaller than $1-\alpha$.}
    \label{fig:data_model1}
\end{figure}

A defining feature of these applications is that the output of a changepoint or root-cause procedure drives a potentially costly downstream action: rerouting traffic, quarantining a host, halting a robot, or isolating a microservice. A single point estimate, reported without any statement of confidence, is therefore of limited value, because an operator cannot tell whether to trust it. Recent work~\cite{kiyani2025decision} has shown that, for a \emph{risk-averse} decision maker, it is optimal to adopt decisions that maximize the given utility function for a worst-case outcome consistent with the observations. Specifically, the appropriate interface between uncertainty quantification and decision making is a \emph{set} of outcomes that provably contains the true one with probability at least $1-\alpha$, with probability $\alpha$ dictating the level of risk tolerance.

Returning a \emph{set} of plausible changepoints or root causes thus allows a decision maker to control its risk by selecting actions that are robust across all elements of the set. For example, an operator may inspect all the plausible causes of a failure in telecommunication networks on the basis of a set prediction. However, in order for this process to be efficient, it is of critical importance that the predicted set be as small as possible.

\begin{figure}[t]
    \centering
    \includegraphics[width=\linewidth]{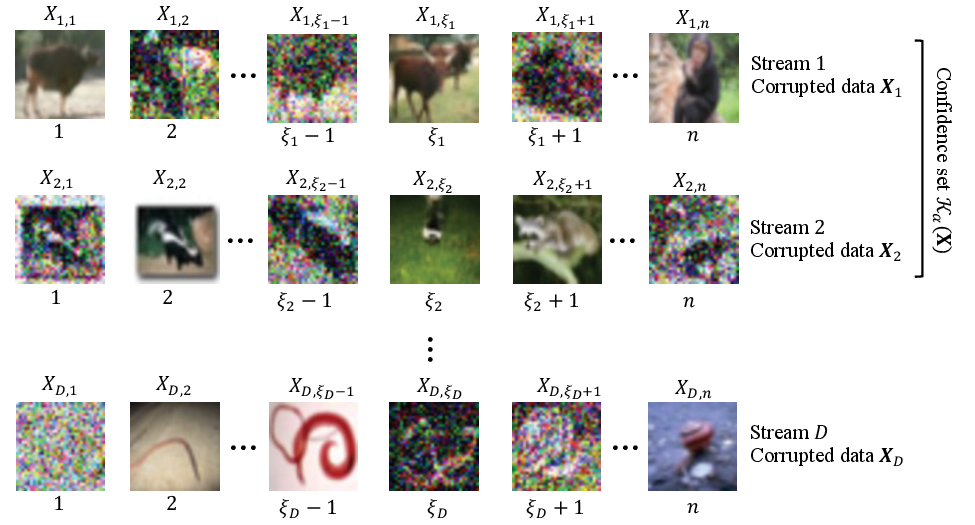}
    \caption{
    Illustration of the problem of offline root-cause localization.
    Each observed stream $\mathbf X_d$ with $d=1,\ldots,D$, undergoes a distributional shift at changepoint $\xi_d$, while some observations may be corrupted.
    The goal is to estimate the stream that has the earliest changepoint $d^\star$ by constructing a confidence set $\mathcal K_{\alpha}(\bX)$ that includes $d^\star$ with probability no smaller than $1-\alpha$.
    }
    \label{fig:data_model2}
\end{figure}

Real observations are routinely \emph{contaminated} by outliers, sensor faults, packet losses, or adversarial perturbations. Contamination can obscure the true change signal and true root cause of a change. Accordingly, even procedures whose coverage guarantee remains valid under contamination, such as \emph{conformal changepoint localization} (CONCH)~\cite{hore2026conformal} and \emph{conformal root cause analysis} (CROC)~\cite{hore2026distribution}, can be forced to return confidence sets so large as to be uninformative. The goal of this paper is to introduce a methodology that, building on CONCH and CROC, preserves validity under contamination while restoring informativeness of set predictors for both single-stream changepoint localization and multi-stream root cause analysis.

\subsection{Related Work}\label{subsec:related}
\subsubsection{Changepoint detection}
Classical changepoint methods include likelihood-ratio tests, CUSUM procedures, nonparametric tests, and kernel-based methods~\cite{page1955test,kim1989likelihood,pettitt1979non,song2024practical}, covering both offline and online settings~\cite{truong2020selective,aminikhanghahi2017survey,romano2023log}.
The theoretical guarantees of these standard methods typically rely on parametric assumptions, asymptotic approximations, or structural conditions, and most target detection or point localization rather than finite-sample confidence sets for the change location. The same limitations are shared by studies that address robustness to contamination, including adversarially robust offline detection under dynamic Huber contamination~\cite{li2021adversarially} and robust online mean-change detection under heavy-tailed noise with delay and false-alarm analysis~\cite{tang2026online}. In contrast to these studies, CONCH~\cite{hore2026conformal} produces a set of plausible changepoints that satisfies finite-sample coverage under a basic split-exchangeability condition, without requiring parametric assumptions. 

\subsubsection{Root cause analysis}
Root cause analysis has been developed largely within specific domains~\cite{sole2017survey,wang2023root}. 
For example, in microservice and distributed systems, causal-discovery and outlier-attribution methods localize the service responsible for a failure~\cite{soldani2022anomaly}, while in robotics fault detection and isolation identify faulty components or anomalous agents~\cite{khalastchi2018fault}. The recently proposed CROC~\cite{hore2026distribution} departs from this pattern by constructing distribution-free confidence sets for the stream whose changepoint occurs first.

\subsubsection{Conformal prediction} 
CONCH and CROC build on the general methodology of conformal prediction. 
Conformal prediction offers distribution-free, finite-sample predictive inference under exchangeability~\cite{vovk1999machine,shafer2008tutorial,romano2019conformalized}.
Extensions relax exchangeability or accommodate distribution shift, including weighted conformal prediction under covariate shift~\cite{tibshirani2019conformal,yoo2026calibrating}, conformal prediction beyond exchangeability~\cite{barber2023conformal}, and adaptive conformal inference for online shift~\cite{gibbs2021adaptive,zecchin2024localized}. Besides the offline settings considered by CONCH and CROC, conformal prediction has been adapted to online changepoint analysis through martingale tests~\cite{vovk2021testing}.

\subsubsection{Robustness and uncertainty estimation}
The contamination model adopted here follows Huber's framework~\cite{huber1964robust,hampel1986robust} and is related to data-poisoning settings~\cite{steinhardt2017certified}.
The weighting scheme relies on classifier uncertainty as a proxy for the probability that an observation is clean, drawing on evidential deep learning (EDL)~\cite{sensoy2018evidential}, Monte Carlo dropout~\cite{gal2016dropout}, deep ensembles~\cite{lakshminarayanan2017simple}, and calibration under shift~\cite{huang2025calibrating,simeone2022machine}. Finally, when the contamination level is unknown, we learn the uncertainty-to-weight mapping via meta-learning~\cite{finn2017model,chen2023learning},
optimizing a differentiable surrogate of the conformal set size in the spirit of conformal-aware training~\cite{stutz2021learning,park2023few}.

\subsection{Main Contributions}\label{subsec:contributions}
This paper develops offline changepoint detection and root cause analysis methods that remain valid and efficient even when observations are contaminated under Huber's model. Our contributions are as follows.
\begin{itemize}

\item \textbf{Weighted conformal changepoint localization.} We propose \emph{weighted CONCH} (W-CONCH), which builds on CONCH~\cite{hore2026conformal} by incorporating a weighting mechanism that downweights observations that are likely to be corrupted. W-CONCH hinges on a novel changepoint plausibility (CPP) score that is derived from a bound on the contaminated marginal likelihood, and is instantiated through an efficient classifier-based implementation.
\item \textbf{Uncertainty-based and meta-learned weights.} We introduce uncertainty-based weighting rules driven by classifier second-order uncertainty signals obtained from EDL or Bayesian learning. We also present a meta-learned variant, MW-CONCH, that directly optimizes a differentiable surrogate of the confidence set size, reducing dependence on hyperparameters.
\item \textbf{Extension to root cause analysis.} We extend the same weighting principle to multi-stream root-cause localization building on CROC \cite{hore2026distribution}. This yields weighted CROC (W-CROC) and its meta-learned variant MW-CROC, which preserve the distribution-free coverage guarantee of CROC while sharpening the root-cause confidence set.

\item \textbf{Empirical validation.} On image-based changepoint and root-cause benchmarks, along with a real-world changepoint benchmark, we show that uncertainty-based weighting substantially reduces confidence set size under contamination while retaining empirical coverage, with meta-learning achieving performance comparable to oracle baselines that use the true contamination indicators to define the weights.
\end{itemize}

\subsection{Organization}\label{subsec:organization}
The remainder of the paper is organized as follows. Section~\ref{sec:problem} formalizes the contaminated changepoint localization and root-cause localization problems. Section~\ref{sec:preliminaries} reviews CONCH and establishes its coverage guarantee under contaminated observations. Section~\ref{sec:method} presents W-CONCH, including the weighted changepoint-plausibility score, the uncertainty-based weighting rules, and the meta-learning procedure. Section~\ref{sec:wcroc} extends the weighting strategy to the multi-stream CROC framework. Section~\ref{sec:experiments} reports the experimental results, and the appendices contain proofs, implementation details, and additional experiments.

\section{Problem Setup}\label{sec:problem}
This paper studies changepoint localization and root cause analysis in a batch setting with corrupted observations.
As shown in Fig.~\ref{fig:data_model1}, in changepoint localization, the distribution of an ordered sequence of observations changes at a single unknown index, referred to as the changepoint.
The goal is to construct a confidence set for the changepoint that achieves finite-sample coverage while remaining as small as possible, despite possible data contamination.
As sketched in Fig.~\ref{fig:data_model2}, root cause analysis can be formulated as an extension of changepoint localization with multiple observed sequences in which one wishes to identify which sequence is the root cause of a change.
The goal here is to provide a set of possible root causes with coverage guarantees, even when the observations may be corrupted.
As we define in the following sections, the proposed techniques specialize the methodologies presented in~\cite{hore2026conformal} and~\cite{hore2026distribution} in order to account for data contamination, while retaining statistical validity.

\subsection{Changepoint Localization}
As illustrated in Fig.~\ref{fig:data_model1}, let $\mathcal X$ denote the observation space, and let $\widetilde{\bX} = (\widetilde{X}_1, \dots, \widetilde{X}_n) \in \mathcal X^n$ denote the {clean data sequence.} We assume that there exists a true changepoint $\xi \in \{1,\dots,n-1\}$ at which the distribution of the clean data $\widetilde{\bX}$ changes. As in~\cite{hore2026conformal}, we impose no parametric assumptions on the pre- and post-change distributions, requiring only the following exchangeability condition.

\begin{assumption}[Split exchangeability~\cite{hore2026conformal}]
\label{asm:split-exch}
    The clean sequence $\widetilde{X}$ is split-exchangeable at the true changepoint $\xi$: the pre- and post-change segments of the clean sequence $\widetilde{\bX}$ are exchangeable, i.e.,
    \begin{subequations}\label{eq:split-exch}
        \begin{align}
            ({\widetilde{X}}_1, \dots, {\widetilde{X}}_\xi)
            &\overset{\mathrm{d}}{=} ({\widetilde{X}}_{\pi_{0,\xi}(1)}, \dots, {\widetilde{X}}_{\pi_{0,\xi}(\xi)}), \\
            ({\widetilde{X}}_{\xi+1}, \dots, {\widetilde{X}}_n)
            &\overset{\mathrm{d} }{=} ({\widetilde{X}}_{\pi_{1,\xi}(\xi+1)}, \dots, {\widetilde{X}}_{\pi_{1,\xi}(n)}),
        \end{align}
    \end{subequations}
     where $\overset{d}{=}$ denotes equality in distribution, while $\pi_{0,\xi}(\cdot)$ and $\pi_{1,\xi}(\cdot)$ are arbitrary permutations of the sets $\{1,\dots,\xi\}$ and $\{\xi+1,\dots,n\}$, respectively.
\end{assumption}

Let $Y_1,\ldots, Y_n \overset{i.i.d.}{\sim} \mathrm{Bernoulli}(\varepsilon)$ be independent and identically distributed (i.i.d.) contamination indicators independent of the clean data $\widetilde{\bX}$ for some corruption probability $\varepsilon \in (0,1)$. Furthermore, let $Z_1, \dots, Z_n \overset{i.i.d.}{\sim} Q$ denote independent draws from a contamination distribution $Q$. 
{Following Huber's contamination model~\cite{huber1964robust}, each sample in the sequence of clean observations $\widetilde{\bX} = (\widetilde{X}_1,\ldots,\widetilde{X}_n)$ is independently corrupted with probability $\varepsilon$, yielding the observation
\begin{equation}\label{eq:contam_obs_model}
X_i = (1 - Y_i)\widetilde{X}_i + Y_i Z_i, \qquad i = 1, \dots, n.
\end{equation}
Accordingly, with probability $1-\varepsilon$, the observed sample $X_i$ equals the clean sample $\widetilde X_i$, while with probability $\varepsilon$, the observed sample $X_i$ is given by the noisy sample $Z_i\sim Q$.

Given the noisy observed sequence $\bX$ in~\eqref{eq:contam_obs_model}, we aim to construct a confidence set $\mathcal C_\alpha(\bX) \subseteq \{1,\dots,n-1\}$ for the unknown changepoint $\xi$ that satisfies the finite-sample coverage guarantee
\begin{align}\label{eq:target-objective}
    \Pr\bigl(\xi\in\mathcal C_\alpha(\bX)\bigr)\ge 1-\alpha
\end{align}
at a user-specified level $\alpha\in(0,1)$,
while reducing as much as possible the set size $|\mathcal C_\alpha(\bX)|$. This problem is studied in Section~\ref{sec:preliminaries} and Section~\ref{sec:method}.

\subsection{Root Cause Analysis}
\label{subsec:multi_stream_contamination_model}

As illustrated in Fig.~\ref{fig:data_model2}, consider now $D$ data streams, each of length $n$. 
For stream $d \in \{1,\ldots,D\}$, let $\widetilde{\bX}_d=(\widetilde X_{d,1},\ldots,\widetilde X_{d,n})$ denote the clean sequence, and let $\xi_d\in\{1,\ldots,n-1\}$ denote its changepoint. 
The vector of stream-wise changepoints is denoted by $\boldsymbol{\xi}=(\xi_1,\ldots,\xi_D)$.
The root-cause index is defined as the index of the stream whose changepoint occurs first, i.e.,
\begin{equation}\label{eq:root_cause_index}
d^\star=\arg\min_{d\in\{1,\ldots,D\}}\xi_d,
\end{equation}
where the minimizer is assumed to be unique. 
The rationale for this definition is that the first changepoint may be the root cause of the changes in the other sequences~\cite{hore2026distribution}.

Generalizing the single-stream setting, following~\cite{hore2026distribution}, we assume that, for each stream $d$ the clean sequence $\widetilde \bX_d$ satisfies split exchangeability at its true changepoint $\xi_d$. 

\begin{assumption}[Per-stream split exchangeability and independence]
\label{assump:multi_stream_split_exchangeability}
Any clean sequence $\widetilde{\mathbf X}_d$ is split-exchangeable at the changepoint $\xi_d$:
For any stream-wise split permutations
$\pi_{0,\xi_d}$ of the set $\{1,\ldots,\xi_d\}$ and
$\pi_{1,\xi_d}$ of the set $\{\xi_d+1,\ldots,n\}$, we have the equivalence in distribution
\begin{subequations}
\begin{align}
&(\widetilde X_{d,1},\ldots,\widetilde X_{d,\xi_d})
\overset{\mathrm{d}}{=}
(\widetilde X_{d,\pi_{0,\xi_d}(1)},\ldots,
\widetilde X_{d,\pi_{0,\xi_d}(\xi_d)}),\\
&\text{and }
(\widetilde X_{d,\xi_d+1},\ldots,\widetilde X_{d,n})
\overset{\mathrm{d}}{=}
(\widetilde X_{d,\pi_{1,\xi_d}(\xi_d+1)},\ldots,
\widetilde X_{d,\pi_{1,\xi_d}(n)}).
\end{align}
\end{subequations}
Furthermore, we assume that the clean streams $\widetilde \bX_{1},\widetilde \bX_{2},\ldots,\widetilde \bX_{D}$ are mutually independent.
\end{assumption}

Each stream is corrupted independently through a stream-wise Huber contamination mechanism. 
Accordingly, let $Y_{d,1},\ldots,Y_{d,n}\overset{i.i.d.}{\sim}\mathrm{Bernoulli}(\varepsilon_d)$ denote the contamination indicators for observation $i$ in stream $d$, and let $Z_{d,1},\ldots,Z_{d,n}\overset{i.i.d.}{\sim} Q_d$ denote an independent draw from a stream-specific contamination distribution $Q_d$. 
The $d$-th observed sequence is given by
\begin{equation}\label{eq:contam_obs_stream_model}
X_{d,i}=(1-Y_{d,i})\widetilde X_{d,i}+Y_{d,i}Z_{d,i}
\end{equation}
for observation index $i=1,\ldots,n$ and stream index $d=1,\ldots,D$.

Let $\bX=(\bX_1,\ldots,\bX_D)$ denote the collection of all observed streams.
The goal of root cause analysis is to construct a confidence set
$\mathcal K_{\alpha}(\bX)\subseteq\{1,\ldots,D\}$
for the unknown root-cause index $d^\star$ such that it includes the true index $d^\star$ in~\eqref{eq:root_cause_index} with probability no smaller than the user-defined level $1-\alpha$, i.e.,
\begin{equation}\label{eq:root_caused_confidence_set}
\Pr\bigl(
d^\star\in \mathcal K_{\alpha}(\bX)
\bigr)
\ge 1-\alpha.
\end{equation}
The root-cause localization problem is studied in Sec.~\ref{sec:wcroc}.

\section{Conformal Changepoint Localization with Contaminated Observations}
\label{sec:preliminaries}
In this section, we first review the conformal changepoint localization scheme introduced in~\cite{hore2026conformal}, which is referred to as 
CONCH.
Then, we show that the coverage guarantees of CONCH carry over to the contaminated observation model~\eqref{eq:contam_obs_model}.
Based on this result, in the next section we will develop the proposed W-CONCH.

\subsection{Conformal Changepoint Localization}
\label{subsec:conformal_changepoint_localization}
For each candidate changepoint $t\in\{1,\dots,n-1\}$, CONCH~\cite{hore2026conformal} evaluates a CPP score $S_t(\bX)$. 
This measures how likely time index $t$ is to be the true changepoint, with larger values indicating stronger evidence in favor of candidate $t$.
The CPP score is treated here as fixed and arbitrary, and we will discuss the optimization of the CPP score in the next section. 

Let $\Pi_t$ denote the set of permutations $\pi_{0,t}(\cdot)$ and $\pi_{1,t}(\cdot)$ that operate on the indices $\{1,\dots,t\}$ and $\{t+1,\dots,n\}$, respectively, as in Assumption~\ref{asm:split-exch}. 
Furthermore, denote as $\pi_t(\mathbf X)$ the permuted sequence $\big(X_{\pi_{0,t}(1)},\ldots,X_{\pi_{0,t}(t)},\allowbreak X_{\pi_{1,t}(t+1)},\ldots,X_{\pi_{1,t}(n)}\big)$ for some pair of permutations
$\pi_t = (\pi_{0,t}, \pi_{1,t})$.
The conformal $p$-value for candidate changepoint $t$ is defined as the fraction of permutations $\pi_t \in \Pi_t$ for which the CPP score $S_t(\pi_t(\mathbf X))$ does not exceed the actual CPP score $S_t(\mathbf X)$:
\begin{equation}\label{eq:pvalue-def}
  p_t(\bX) = \frac{1}{|\Pi_{t}|} \sum_{\pi_t\in\Pi_t}
    \Ind\bigl[S_t(\pi_t(\bX)) \leq S_t(\bX)\bigr].
\end{equation}
Intuitively, the statistic $p_t(\bX)$ in~\eqref{eq:pvalue-def} tends to be large if the CPP score $S_t(\mathbf X)$ is large, and thus there is evidence for $t$ being the true changepoint. 

In practice, when the number of data points $n$ is sufficiently large, enumerating all the split permutations in set ${\Pi}_t$ is computationally infeasible. 
In this case, a Monte Carlo (MC) variant of CONCH, referred to as CONCH-MC, replaces the full average over set $\Pi_t$ in~\eqref{eq:pvalue-def} with an average over randomly sampled split permutations.

The conformal $p$-value~\eqref{eq:pvalue-def}, and its MC version, are valid $p$-values for the null hypothesis $H_0:\xi=t$ that the true changepoint $\xi$ equals $t$.
Accordingly, given a miscoverage level $\alpha\in(0,1)$, the changepoint confidence set is obtained by inverting the test with null hypothesis $H_0 : \xi = t$~\cite{rice2007mathematical}, i.e., by retaining all candidates whose $p$-values exceed the level $\alpha$:
\begin{equation}\label{eq:cs}
  \mathcal{C}_\alpha(\bX)
  = \bigl\{ t \in \{1,\dots,n-1\} : p_t > \alpha \bigr\}.
\end{equation}
In other words, the confidence set retains all candidate changepoints that cannot be rejected by the split-permutation test with $p$-value~\eqref{eq:pvalue-def} at level $\alpha$.

\subsection{Coverage Guarantee of CONCH under Contamination}
Reference~\cite{hore2026conformal} shows that the CONCH set~\eqref{eq:cs} satisfies the marginal coverage guarantee~\eqref{eq:target-objective} as long as the sequence $\bX$ is split-exchangeable at the true changepoint $\xi$.
The following makes it possible to extend the coverage property also to the noisy observations~\eqref{eq:contam_obs_model} under Assumption~\ref{asm:split-exch}, which assumes split-exchangeability only for the clean sequence $\widetilde \bX$.
\begin{lemma}[Split exchangeability under contamination]\label{lem:coverage-exchangeability}
Under Assumption~\ref{asm:split-exch}, the corrupted observed sequence
$\bX$ generated according to~\eqref{eq:contam_obs_model} satisfies split exchangeability at the true changepoint $\xi$.
\end{lemma}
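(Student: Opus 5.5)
We prove the claim by writing $\bX$ as a fixed measurable map applied coordinatewise to a joint vector, and checking that a split permutation of the indices changes the law of that vector in neither block. Fix split permutations $\pi_{0,\xi}$ of $\{1,\dots,\xi\}$ and $\pi_{1,\xi}$ of $\{\xi+1,\dots,n\}$. Write $\pi=\pi_\xi=(\pi_{0,\xi},\pi_{1,\xi})$ for the combined permutation of $\{1,\dots,n\}$, which maps each block to itself.

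Define the triples $W_i=(\widetilde X_i,Y_i,Z_i)$ and the map $g(x,y,z)=(1-y)x+yz$. Then~\eqref{eq:contam_obs_model} reads $X_i=g(W_i)$. It therefore suffices to show
\begin{equation*}
(W_{\pi(1)},\dots,W_{\pi(n)})\overset{\mathrm d}{=}(W_1,\dots,W_n).
\end{equation*}
Once this holds, applying $g$ coordinatewise gives $\pi(\bX)\overset{\mathrm d}{=}\bX$. Marginalizing to the pre-change coordinates $1,\dots,\xi$, and separately to the post-change coordinates $\xi+1,\dots,n$, yields exactly the two equalities in~\eqref{eq:split-exch} for $\bX$.

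To establish the displayed identity, use the independence structure of the model. The vector $\widetilde\bX$, the vector $\mathbf Y=(Y_1,\dots,Y_n)$ and the vector $\mathbf Z=(Z_1,\dots,Z_n)$ are mutually independent. We read the model this way: $\mathbf Y$ is stated to be independent of $\widetilde\bX$, and $\mathbf Z$ is an independent draw. Consequently the joint law of $(W_1,\dots,W_n)$ is the product of the law of $\widetilde\bX$, the law $\mathrm{Bernoulli}(\varepsilon)^{\otimes n}$, and the law $Q^{\otimes n}$.

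Applying $\pi$ to all three components acts on each factor separately. The i.i.d.\ factors $\mathbf Y$ and $\mathbf Z$ are invariant under any permutation. For $\widetilde\bX$, a block-preserving permutation acts on the two blocks jointly, so we need the joint invariance $\pi(\widetilde\bX)\overset{\mathrm d}{=}\widetilde\bX$.

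This joint-invariance step is the main obstacle. Assumption~\ref{asm:split-exch} as written states invariance of each segment marginally, not of the pair of segments jointly. For the argument to go through for the full vector, we interpret split exchangeability as in~\cite{hore2026conformal}, namely invariance of the whole sequence under block-preserving permutations. That joint invariance is also what the CONCH $p$-value~\eqref{eq:pvalue-def} actually uses.

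If one insists on the literal marginal reading, the same argument still proves exactly the marginal statements for $\bX$. The reason is that we can take $\pi_{1,\xi}$ to be the identity and restrict attention to coordinates $1,\dots,\xi$, and symmetrically for the post-change block. In that case only the marginal invariance of each segment of $\widetilde\bX$ is needed, and the product-measure argument applies unchanged on each segment.

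Under either reading, $\bX$ inherits split exchangeability at $\xi$.
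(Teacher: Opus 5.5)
Your proof is correct and follows essentially the same route as the paper: augment each observation with its contamination pair $(Y_i,Z_i)$, use that these are i.i.d.\ and independent of $\widetilde{\bX}$ to get exchangeability of the augmented segment, and push forward through the deterministic map $(1-y)x+yz$. The paper argues segment by segment, matching the marginal wording of Assumption~\ref{asm:split-exch}. Your remark is a fair point that the paper leaves implicit: the joint block-wise invariance $\pi(\bX)\overset{\mathrm d}{=}\bX$, which the paper later invokes in the proof of Proposition~\ref{prop:weighted-coverage}, requires the joint reading of that assumption.
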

\begin{proof}
See Appendix~\ref{app:proof-exchangeability-contamination}.    
\end{proof}
This result has the following immediate consequence.

\begin{lemma}[Coverage guarantee under contamination]\label{lem:coverage-contamination}
    Under Assumption~\ref{asm:split-exch}, the CONCH set $\mathcal C_\alpha(\bX)$ defined in~\eqref{eq:cs}, when applied to the observed corrupted sequence $\bX$ in~\eqref{eq:contam_obs_model}, satisfies the required coverage condition~\eqref{eq:target-objective}.
\end{lemma}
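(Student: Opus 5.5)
The plan is to combine Lemma~\ref{lem:coverage-exchangeability} with the standard permutation-test argument for CONCH. By Lemma~\ref{lem:coverage-exchangeability}, the observed sequence $\bX$ is split-exchangeable at the true changepoint $\xi$. So, conditionally on $\xi$ (which is fixed), for every pair $\pi_\xi=(\pi_{0,\xi},\pi_{1,\xi})\in\Pi_\xi$ we have $\pi_\xi(\bX)\overset{\mathrm d}{=}\bX$. It therefore suffices to show that $p_\xi(\bX)$ in~\eqref{eq:pvalue-def} is a valid $p$-value, i.e., $\Pr(p_\xi(\bX)\le\alpha)\le\alpha$. Coverage then follows immediately from~\eqref{eq:cs}, since $\Pr(\xi\notin\mathcal C_\alpha(\bX))=\Pr(p_\xi(\bX)\le\alpha)$.

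To prove validity, I will use the group structure of $\Pi_\xi$. It is the product of the symmetric groups on $\{1,\dots,\xi\}$ and $\{\xi+1,\dots,n\}$, so composition with a fixed $\sigma\in\Pi_\xi$ is a bijection of $\Pi_\xi$. Hence the multiset $\{S_\xi(\pi(\bX)):\pi\in\Pi_\xi\}$ equals $\{S_\xi(\pi(\sigma(\bX))):\pi\in\Pi_\xi\}$. Now draw $\sigma$ uniformly from $\Pi_\xi$, independently of $\bX$. Because $\sigma(\bX)\overset{\mathrm d}{=}\bX$, the random variable $p_\xi(\bX)$ has the same law as $p_\xi(\sigma(\bX))$. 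The latter is the normalized rank of $S_\xi(\sigma(\bX))$ among the orbit scores $\{S_\xi(\pi(\bX))\}_{\pi}$, and conditionally on $\bX$ the index $\sigma$ is uniform over this orbit. The standard rank argument then gives $\Pr(p_\xi\le\alpha\mid\bX)\le\alpha$. The key point is that for the fraction $F(s)$ of orbit elements with score $\le s$, a uniform draw $U$ satisfies $\Pr(F(U)\le\alpha)\le\alpha$, and this holds with ties included because $F$ is a right-continuous empirical CDF. Averaging over $\bX$ yields the unconditional bound.

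For the Monte Carlo variant, the same argument applies if the observed sequence is included among the sampled permutations, e.g., by counting the identity. The sampled permutations together with $\sigma$ are then exchangeable, which gives the usual $(1+\sum)/(1+M)$ validity.

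The main obstacle is handling ties and the composition identity carefully, specifically ensuring that $\pi\circ\sigma$ ranges over all of $\Pi_\xi$ and that the permutations act consistently on indices. I will invoke the closure of $\Pi_\xi$ under composition directly. Alternatively, this step can be shortcut by citing the coverage theorem of~\cite{hore2026conformal}, which assumes exactly split exchangeability of the input sequence; Lemma~\ref{lem:coverage-exchangeability} supplies that hypothesis for $\bX$.
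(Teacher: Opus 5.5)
Your proposal is correct and follows the paper's route. The paper's proof simply combines Lemma~\ref{lem:coverage-exchangeability} with \cite[Thm.~3.1]{hore2026conformal}, which is the shortcut you mention at the end; your group-closure and rank argument, including the tie handling via the right-continuous empirical CDF, is a sound unpacking of that cited theorem, and matches the argument the paper spells out in Appendix~\ref{app:proof-weighted-coverage} for the weighted score.
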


\begin{proof}
By Lemma~\ref{lem:coverage-exchangeability}, the claim follows directly from \cite[Thm.~3.1]{hore2026conformal}.    
\end{proof}

Lemma~\ref{lem:coverage-contamination} ensures that the CONCH set~\eqref{eq:cs} meets the coverage requirement in~\eqref{eq:target-objective} regardless of the contamination level $\varepsilon$. 
However, unless the CPP score $S_t(X)$ is properly designed, contaminated observations can obscure the changepoint signal and inflate the confidence set (see Fig.~\ref{fig:data_model1}).
We address this efficiency issue by introducing novel CPP scores in the next section.

\section{Weighted Conformal Changepoint Localization}
\label{sec:method}
Lemma~\ref{lem:coverage-contamination} ensures that the CONCH set~\eqref{eq:cs} remains valid under the contamination model~\eqref{eq:contam_obs_model}, but contaminated observations generally increase the size $|\mathcal{C}_\alpha(\bX)|$ of the confidence set, making it more difficult in identifying the changepoint.
To address this issue, in this section we introduce CPP scores that aim at downweighting observations that are likely to be corrupted. 
To this end, we first consider an oracle-based setting in which the model~\eqref{eq:split-exch}--\eqref{eq:contam_obs_model} is known, and then introduce a practical implementation based on uncertainty signals.
Finally, we present a meta-learning-based strategy that leverages offline data from multiple tasks to further improve the CPP score.

\subsection{Weighted CPP Score}
\label{subsec:weighted-score}
In this section, we consider a simplified oracle-based setting in order to facilitate the derivation of an optimal CPP score under contamination.
To this end, following~\cite{hore2026conformal}, we specialize the data model in Sec.~\ref{sec:problem} to a standard i.i.d.\ changepoint model with known probability distribution~{\cite{kim1989likelihood,pettitt1979non, dandapanthula2025offline}}.
\begin{assumption}[i.i.d. oracle changepoint model]
\label{assump:iid_changepoint_model}
The clean pre- and post-change distributions admit known densities $f_0$ and $f_1$, so that the pre-change samples
$\widetilde X_1,\ldots,\widetilde X_\xi$
are drawn i.i.d. from a distribution with density $f_0(x)$ and post-change samples $\widetilde X_{\xi+1},\ldots,\widetilde X_n$ are drawn i.i.d.\ from a distribution with density $f_1(x)$:
\begin{equation}\label{eq:iid_chnagepoint_model}
    \widetilde X_1,\ldots,\widetilde X_\xi \overset{i.i.d}{\sim} f_{0}(x),\quad 
    \widetilde X_{\xi+1},\ldots,\widetilde X_n \overset{i.i.d}{\sim} f_{1}(x).
\end{equation}
Furthermore, each observation $X_i$ is independently drawn from the contaminated marginal density 
\begin{equation}\label{eq:contam_marginal}
    g_j(x) = (1-\varepsilon) f_j(x) + \varepsilon q(x), 
\end{equation}
with $j=0$ for pre-change samples $i=1,\ldots,\xi$ and $j=1$ for post-change samples $i=\xi+1,\ldots,n$, where probability $\varepsilon$ and density $q(x)$ are known.
\end{assumption}

\subsubsection{Optimal Oracle CPP Score}
\label{subsubsec:optimal_oracle_score}

Under Assumption~\ref{assump:iid_changepoint_model}, the marginal densities
$g_0(x)$ and $g_1(x)$ in~\eqref{eq:contam_marginal} are known, and so is the true changepoint $\xi$.
Under this idealized assumption~\ref{asm:split-exch}, the following likelihood-ratio statistic can be proved to yield the optimal CPP score that minimizes the expected size of the confidence set $\mathcal{C}_{\alpha}(\bX)$.

\begin{lemma}[Optimal CPP score under contamination]
\label{lem:opt_score_contam}
Under Assumption~\ref{assump:iid_changepoint_model}, the CPP score
\begin{equation}\label{eq:opt_score_contam}
S^{\mathrm{opt}}_t(\bX)
=
\frac{\prod_{i\le t} g_0(X_i)\prod_{i>t} g_1(X_i)}
     {\prod_{i\le \xi} g_0(X_i)\prod_{i>\xi} g_1(X_i)}
\end{equation}
attains the minimum average set size $|\mathcal{C}_{\alpha}(\bX)|$ among all CONCH sets defined in~\eqref{eq:cs}.
\end{lemma}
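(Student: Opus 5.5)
The plan is to reduce the expected set size to a sum of per-candidate acceptance probabilities and then optimize each term separately with a Neyman--Pearson argument, in the style of the optimality result for CONCH in \cite{hore2026conformal}. By linearity of expectation,
\begin{equation*}
\mathbb E\bigl[|\mathcal C_\alpha(\bX)|\bigr]=\sum_{t=1}^{n-1}\Pr_{\xi}\bigl(p_t(\bX)>\alpha\bigr),
\end{equation*}
where $\Pr_\xi$ denotes the law of $\bX$ under Assumption~\ref{assump:iid_changepoint_model}. Under that assumption the observations $X_i$ are independent with density $g_0$ for $i\le\xi$ and $g_1$ for $i>\xi$. The term $t=\xi$ is at least $1-\alpha$ for every score by Lemma~\ref{lem:coverage-contamination}, so it cannot be improved. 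For $t\ne\xi$, the quantity $\Pr_\xi(p_t>\alpha)$ is the type-II error of the split-permutation test of $H_0:\xi=t$, evaluated under the alternative $P_\xi$. The goal is to show that $S^{\mathrm{opt}}_t$ minimizes each such term simultaneously.

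For fixed $t$, I would condition on the orbit $\mathcal O_t(\bX)=\{\pi_t(\bX):\pi_t\in\Pi_t\}$, i.e.\ on the unordered pre- and post-$t$ multisets. Under $H_0$, with independent $g_0$ draws before $t$ and $g_1$ draws after, the conditional law of $\bX$ on its orbit is uniform. This is exactly what makes \eqref{eq:pvalue-def} valid, and it means that, within an orbit, the permutation test is a test of a uniform null. Under the alternative $P_\xi$, the conditional law on the orbit is proportional to $\prod_{i\le\xi}g_0(X_i)\prod_{i>\xi}g_1(X_i)$. Since the numerator of \eqref{eq:opt_score_contam} is constant on the orbit, this conditional law is proportional to $1/S^{\mathrm{opt}}_t$.

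The next step is the conditional Neyman--Pearson lemma. The test rejects when $p_t\le\alpha$, that is, when $S_t(\bX)$ lies among the smallest $\alpha$-fraction of the orbit's scores. Among all subsets of the orbit of uniform mass at most $\alpha$, the alternative mass is maximized by taking the points with the largest alternative-to-null likelihood ratio, which are the points with the smallest $S^{\mathrm{opt}}_t$. The rejection region of $S^{\mathrm{opt}}_t$ is therefore the Neyman--Pearson region on every orbit. Its conditional rejection probability is thus at least that of any other score $S_t$. Integrating over orbits gives $\Pr_\xi(p_t\le\alpha)$ maximal for every $t\ne\xi$, and summing over $t$ yields the claim.

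The main obstacle is the handling of ties and the discreteness of the permutation distribution. The non-randomized rule $p_t>\alpha$ achieves a level equal to $\lfloor\alpha|\Pi_t|\rfloor/|\Pi_t|$ only on the lower tail, and with ties in scores both the competing test and the optimal one may fail to exhaust level $\alpha$. I would first handle this by noting that any test of the form \eqref{eq:pvalue-def} rejects a set that is a ``lower set'' of its own score. Such a set has uniform mass at most $\alpha$, so the Neyman--Pearson comparison is between sets of mass at most $\alpha$. The optimal score's rejection set is then a maximal-mass lower set of the likelihood ratio, and it dominates any competitor under the exchange argument. If ties in $S^{\mathrm{opt}}_t$ break this domination, I would state the result for randomized tie-breaking, as done in \cite{hore2026conformal}, or assume continuity so that ties occur with probability zero. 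A secondary point is that $S^{\mathrm{opt}}_t$ depends on the unknown $\xi$. This is acceptable because the lemma is an oracle statement, and the orbit argument only uses the known densities $g_0$ and $g_1$.
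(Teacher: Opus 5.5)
Your core argument (conditioning on the $\Pi_t$-orbit, where the null is uniform and the alternative is proportional to $1/S^{\mathrm{opt}}_t$, then applying Neyman--Pearson orbit by orbit) is the right one. It is essentially what the paper outsources to \cite[Thm.~4.3]{hore2026conformal}, after noting that under Assumption~\ref{assump:iid_changepoint_model} the observations are independent with densities $g_0$ and $g_1$. However, two steps fail for the non-randomized $p$-value \eqref{eq:pvalue-def} that you are working with.

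The first is ``the term $t=\xi$ is at least $1-\alpha$ for every score, so it cannot be improved''. This does not follow: a lower bound shared by all scores says nothing about whether the oracle attains it, and here it does not. Since $S^{\mathrm{opt}}_\xi(\bX)\equiv 1$, every permutation ties, so $p_\xi\equiv 1$ and $\Pr_\xi(\xi\in\mathcal C_\alpha)=1$. Consider a competitor that uses $S^{\mathrm{opt}}_t$ for all $t\neq\xi$ and, at $t=\xi$, any score that a.s.\ has no ties across $\Pi_\xi$. It has identical terms for $t\neq\xi$, but $\Pr_\xi(\xi\in\mathcal C_\alpha)=1-\lfloor\alpha|\Pi_\xi|\rfloor/|\Pi_\xi|<1$ as soon as $\alpha|\Pi_\xi|\ge 1$. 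Its expected set size is therefore strictly smaller.

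The second is your fallback of assuming continuity so that ties have probability zero. This does not work, because the ties of $S^{\mathrm{opt}}_t$ within an orbit are structural. For $\pi\in\Pi_t$, $S^{\mathrm{opt}}_t(\pi(\bX))$ depends on $\pi$ only through the index set $\pi(\{1,\ldots,\xi\})$. So for $t<\xi$ (symmetrically for $t>\xi$) it is constant on each of $B=\binom{n-t}{\xi-t}$ blocks, each of null mass $1/B$. The non-randomized rule can therefore only reject whole blocks, of total null mass at most $\lfloor\alpha B\rfloor/B$. By contrast, a score that orders the orbit by $S^{\mathrm{opt}}_t$ and breaks within-block ties injectively rejects those same blocks plus part of the next one, which carries positive alternative mass. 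For example, with $n=800$, $\xi=350$, $t=349$ and $\alpha=0.05$, one has $B=451$ and $\alpha B=22.55$, so the oracle is strictly beaten.

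Both defects disappear if you adopt, from the outset, the randomized $p$-value
\[
p^{\mathrm{rand}}_t(\bX)=\frac{1}{|\Pi_t|}\Bigl(\#\{\pi\in\Pi_t: S_t(\pi(\bX))<S_t(\bX)\}+U\,\#\{\pi\in\Pi_t: S_t(\pi(\bX))=S_t(\bX)\}\Bigr),\qquad U\sim\mathrm{Unif}[0,1],
\]
with $U$ independent of $\bX$. This $p$-value is exactly uniform under $H_0$ for every score. Hence the $t=\xi$ term equals $1-\alpha$ for all scores. For $t\neq\xi$, the oracle's rule becomes the randomized Neyman--Pearson test of exact conditional level $\alpha$ on each orbit, which dominates every permutation test of conditional level at most $\alpha$. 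Under that convention, which is the setting in which an exact optimality claim of this form holds, your argument goes through as written. With \eqref{eq:pvalue-def} as stated, it does not.
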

\begin{proof}
Follows directly from \cite[Thm.~4.3]{hore2026conformal}.    
\end{proof}

\subsubsection{Weighted CPP Score}
\label{subsubsec:weighted-score}
The score in~\eqref{eq:opt_score_contam} is theoretically optimal but practically inaccessible, since the contaminated marginals {$g_0(x)$ and $g_1(x)$} in~\eqref{eq:contam_marginal} depend on a number of unknowns, namely the clean densities {$f_0(x)$ and $f_1(x)$}, the contamination level $\varepsilon$, and the contaminated samples density $q(x)$, while the denominator further requires the unknown true changepoint $\xi$. To obtain an implementable score, we make the following substitutions: {1)} each unknown clean density $f_j(x)$ is replaced by a learned estimate $\hat f_{j}(x)$; {2)} the true changepoint $\xi$ in the denominator is replaced by an estimate $\hat\xi(\bX)$; and {3)} each contaminated likelihood $g_j(x)$ is replaced by a tractable bound.

The approximations 1) and 2) follow reference~\cite{hore2026conformal}, and they will be further discussed below for completeness. In contrast, the approximation {3)} is specific to our proposed CPP score, and is derived next via a bound on the distribution~\eqref{eq:contam_marginal}.

\begin{proposition}\label{prop:elbo_bound}
Assume that the observation space $\mathcal X$ is bounded, and let the contamination density $q$ is bounded away from zero and infinity on its support, i.e., $|\log q(x)|\le C_q$ for some constant $C_q<\infty$ for all $x$.
Then, the marginal density $g_j(x)$ in~\eqref{eq:contam_marginal} can be lower bounded as
\begin{equation}\label{eq:elbo_bound}
    g_j(x) \;\ge\; C \cdot f_j(x)^{w_j(x)},
\end{equation}
for $j=0,1$ where $w_j:\mathcal X \to [0,1]$ is a weight function, and $C$ is a constant depending only on $C_q$ and $\varepsilon$.
\end{proposition}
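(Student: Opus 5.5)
The name ``ELBO bound'' suggests treating the mixture \eqref{eq:contam_marginal} as a marginal over the latent contamination indicator $Y\in\{0,1\}$ and applying Jensen's inequality. For any ``variational'' distribution $r(Y=0)=w_j(x)$, $r(Y=1)=1-w_j(x)$, we would write
\begin{equation*}
\log g_j(x)=\log\Bigl(w_j(x)\frac{(1-\varepsilon)f_j(x)}{w_j(x)}+(1-w_j(x))\frac{\varepsilon q(x)}{1-w_j(x)}\Bigr)
\ge w_j(x)\log\frac{(1-\varepsilon)f_j(x)}{w_j(x)}+(1-w_j(x))\log\frac{\varepsilon q(x)}{1-w_j(x)}.
\end{equation*}
This holds by concavity of the logarithm, with the convention $0\log(\cdot/0)=0$ at the endpoints $w_j\in\{0,1\}$. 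The bound is valid for an arbitrary weight function $w_j:\mathcal X\to[0,1]$, which is exactly the freedom the statement needs. It is tight when $w_j(x)$ equals the posterior probability of being clean, $(1-\varepsilon)f_j(x)/g_j(x)$, and this choice will motivate the uncertainty-based weights later.

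Next we would separate the right-hand side into the term $w_j(x)\log f_j(x)$ and a remainder whose lower bound depends only on $C_q$ and $\varepsilon$. The remainder is
\begin{equation*}
w_j\log(1-\varepsilon)+(1-w_j)\log\varepsilon+(1-w_j)\log q(x)+H(w_j),
\end{equation*}
where $H(w)=-w\log w-(1-w)\log(1-w)\ge 0$ is the binary entropy, so it can be dropped. Using $w_j\in[0,1]$ and $|\log q(x)|\le C_q$, the remainder is at least $\min\{\log(1-\varepsilon),\log\varepsilon\}-C_q$. Setting $C=\min\{1-\varepsilon,\varepsilon\}\,e^{-C_q}$ and exponentiating gives $g_j(x)\ge C\,f_j(x)^{w_j(x)}$, with $C$ depending only on $(C_q,\varepsilon)$.

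The main technical care lies at the boundary cases. If $w_j(x)=0$, the claim reduces to $g_j\ge \varepsilon q\ge \varepsilon e^{-C_q}$, which requires $x$ in the support of $q$. If $w_j(x)=1$, it reduces to $g_j\ge(1-\varepsilon)f_j$, which holds directly. Points where $f_j(x)=0$ with $w_j>0$ are trivial because the right-hand side vanishes. Points outside the support of $q$ with $w_j<1$ would need either the convention that the support of $q$ is $\mathcal X$ (which is natural given boundedness of $\mathcal X$ and the hypothesis on $q$) or a direct check. We would state this convention explicitly. The boundedness of $\mathcal X$ is used only to make ``bounded away from zero and infinity'' meaningful, so that $q$ can be a proper density with $|\log q|\le C_q$ everywhere.
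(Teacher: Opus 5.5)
Your proof is correct and matches the paper's argument. Your Jensen step is exactly the ELBO with the KL divergence to the true posterior dropped. Discarding the binary entropy while bounding $w_j\log(1-\varepsilon)+(1-w_j)\log\varepsilon$ below by $\log\min\{\varepsilon,1-\varepsilon\}$ is the same as the paper's bound $\operatorname{KL}(r_j\|p)\le\log(1/\min\{\varepsilon,1-\varepsilon\})$, and it gives the same constant $C=\min\{\varepsilon,1-\varepsilon\}e^{-C_q}$. Your boundary-case discussion is harmless; the support issue is already settled by the stated hypothesis $|\log q(x)|\le C_q$ for all $x$.
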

\begin{proof}
As detailed in Appendix~\ref{app:elbo_bound}, the inequality~\eqref{eq:elbo_bound} follows by applying the evidence lower bound (ELBO)~\cite{simeone2022machine} to the log-marginal $\log g_j(x)$ and then bounding the resulting remainder term using the stated boundedness assumption.    
\end{proof}

Substituting the bound~\eqref{eq:elbo_bound} for each unknown density $g_j(X_i)$ in the optimal score~\eqref{eq:opt_score_contam}, as well as the unknown clean densities $f_j(x)$ by estimates $\hat f_j(x)$ and the true changepoint $\xi$ by an estimate $\hat\xi(\bX)$ yields the proposed weighted CPP score defined next.

\begin{definition}[Weighted CPP Score]\label{def:weighted_cpp}
Given density estimates $\hat f_{j}(x)$ for $j=0, 1$, changepoint locator estimate $\hat{\xi}(\mathbf X)$, and arbitrary weight functions $w_0(\mathbf X)$ and $w_1(\mathbf X)$, the weighted CPP score is defined as
\begin{equation}
\label{eq:weighted_cpp_score}
\begin{aligned}
S_t^w(\bX)
&=
\log\!\left(
\frac{
\prod_{i \le t} \hat f_{0}(X_i)^{w_0(X_i)}
\prod_{i > t} \hat f_{1}(X_i)^{w_1(X_i)}
}{
\prod_{i \le \hat{\xi}(\bX)} \hat f_{0}(X_i)^{w_0(X_i)}
\prod_{i > \hat{\xi}(\bX)} \hat f_{1}(X_i)^{w_1(X_i)}
}
\right).
\end{aligned}
\end{equation}
\end{definition}

In the weighted CPP score~\eqref{eq:weighted_cpp_score}, each likelihood contribution $\hat f_{j}(X_i)$ is modulated by its weight $w_j(X_i)$, so that observations with small {weight $w_j(X_i)$} have reduced influence on the score. As shown in the proof of Proposition~\ref{prop:elbo_bound}, the weight $w_j(x)$ in~\eqref{eq:elbo_bound} ideally corresponds to the posterior probability that the corrupted sample $X_i=(1-Y_i)\widetilde X_i + Y_i Z_i$ in~\eqref{eq:contam_obs_model} is clean, i.e., that we have $Y_i=0$. Thus, intuitively, the introduction of the weight $w_j(x)$ allows the score~\eqref{eq:weighted_cpp_score} to selectively gives more relevance to samples $X_i$ that are more likely to be clean. Sec.~\ref{sec:weighting-construction} will discuss a practical implementation of the score~\eqref{eq:weighted_cpp_score} in which the weights $\{w_0(\cdot), w_1(\cdot)\}$ are extracted from uncertainty signals produced by a classifier. This construction builds on a {classifier-based} version of the score~\eqref{eq:weighted_cpp_score}, which is discussed next.

\subsubsection{Classifier-Based CPP Score}
\label{subsubsec:classifier-based_score}
The weighted CPP score~\eqref{eq:weighted_cpp_score} can be implemented using different estimates of the densities $f_0(x)$, and ${f}_1(x)$, and of the changepoint ${\xi}$. 
Following~\cite{hore2026conformal}, instead of relying on separate estimates $f_0(x)$ and $f_1(x)$ of the clean data distributions, here we assume access to a binary classifier $\{\hat p(j\mid x)\}_{j=0,1}$, trained to distinguish between clean pre-change samples $\widetilde \bX \sim f_0(x)$, labeled as $j=0$, from clean post-change samples $\widetilde \bX \sim f_1(x)$, labeled as $j=1$~\cite{joseph2018adversarial}.
Note that the classifier is trained offline using clean labeled data that is independent of the current batch $X$.
Given such a classifier, write the classifier log-posterior ratio
\begin{equation}
\label{eq:classifier_log_posterior_contrast}
\Delta_i
=
\log\frac{\hat p(j=0\mid X_i)}
{\hat p(j=1\mid X_i)}.
\end{equation}
When the labels $j=0$ and $j=1$ are a priori equally likely, the ratio~\eqref{eq:classifier_log_posterior_contrast} approximates the log-likelihood ratio as $\Delta_i\approx \log (f_0(X_i)/f_1(X_i))$~\cite{simeone2022machine,joseph2018adversarial}.

Using this approximation, the maximum likelihood estimation (MLE) of the changepoint $\xi$ can be expressed as~\cite{hore2026conformal},
$\hat{\xi}(\bX) = \arg\max_s \sum_{i\le s} \Delta_i$.
Under contamination, we analogously define the corresponding weighted log-likelihood and weighted MLE.
Furthermore, using the same approximation together with the weighted estimate~\eqref{eq:weighted_mle} in~\eqref{eq:weighted_cpp_score} and canceling the terms that do not affect the maximization over the reference changepoint yield the following classifier-based weighted CPP score.

\begin{definition}[Classifier-Based Weighted CPP Score]\label{def:clf_weighted_cpp}
Given a classifier $\{\hat p(j\mid x)\}_{j=0,1}$ with log-posterior ratio
$\Delta_i$ as in~\eqref{eq:classifier_log_posterior_contrast}, and given weight
functions $w_j:\mathcal X\to[0,1]$, let
$w^{(t)}(X_i)=w_0(X_i)$ for $i\le t$ and
$w^{(t)}(X_i)=w_1(X_i)$ for $i>t$, and define
the corresponding weighted log-likelihood
\begin{equation}
N_t(s) = \sum_{i\le s}w^{(t)}(X_i)\log\hat p(j=0\mid X_i) + \sum_{i>s}w^{(t)}(X_i)\log\hat p(j=1\mid X_i).    
\end{equation}\label{eq:weighted_loglik}
The corresponding weighted MLE is
\begin{equation}\label{eq:weighted_mle}
\hat\xi_t(\bX) = \arg\max_{s\in\{1,\ldots n-1\}}N_t(s).
\end{equation}
The classifier-based weighted CPP score is defined as
\begin{equation}\label{eq:weighted_cusum}
S_t^w(\bX)
=
\sum_{i\le t}w_0(X_i)\Delta_i
-
\sum_{i\le \hat\xi_t(\bX)}w^{(t)}(X_i)\Delta_i.
\end{equation}
\end{definition}

The derivation of the weighted CPP score~\eqref{eq:weighted_cusum} is provided in Appendix~\ref{app:classifier-cusum-derivation}.
The first term in~\eqref{eq:weighted_cusum} represents the cumulative weighted log-posterior ratio up to the candidate changepoint $t$.
The second term represents the cumulative weighted log-posterior ratio evaluated at the weighted MLE $\hat\xi_t(\bX)$, with the latter corresponding to the most plausible competing changepoint.
Thus, the CPP $S_t^w(\bX)$ measures the relative plausibility of candidate $t$ against the best competing changepoint.

\subsection{Uncertainty-Based Weights}
\label{sec:weighting-construction}
As discussed in Sec.~\ref{subsec:weighted-score}, the weight $w_j(x)$ used
in the CPP score~\eqref{eq:weighted_cpp_score} or~\eqref{eq:weighted_cusum}
should ideally capture the true probability that a sample $X \sim g_j(x)$ is clean. 
To approximate this probability, which depends on the unknown distribution~\eqref{eq:elbo_bound}, we propose to use classification uncertainty as a proxy. 
The rationale for this choice is that a classifier model $\hat{p}(j\mid x)$ trained on clean data should ideally be confident on clean in-distribution samples and uncertain on contaminated ones~\cite{huang2025calibrating, malinin2018predictive}.
With this approach, low-uncertainty observations $X_i$ receive larger weights, while high-uncertainty observations receive smaller weights.

To elaborate, as in Sec.~\ref{subsubsec:classifier-based_score}, consider a classifier $\hat p(j\mid x)$ pre-trained to distinguish between clean samples from the pre-change density $f_0(x)$, labeled with $j=0$ and the post-change density $f_1(x)$, labeled as $j=1$.
In addition to the class posterior scores $\{\hat p(j\mid x)\}_{j=0,1}$, which are used to construct the classifier-based CPP score~\eqref{eq:weighted_cusum}, we assume that the classifier provides an uncertainty signal
\begin{equation}\label{eq:uncertainty_score}
M_i=h(X_i),    
\end{equation}
for some function $h:\mathcal X\to\mathbb R_{\ge 0}$, where larger values of $M_i$ indicate higher classification uncertainty.
For example, the function $h(\cdot)$ can be obtained from an evidential classifier, which uses the parameters of a predictive Beta distribution to quantify second-order uncertainty~\cite{sensoy2018evidential}, i.e., uncertainty about the predictive distribution, or using Bayesian methods, such as MC dropout~\cite{gal2016dropout}, which estimate such uncertainty via ensembling~\cite{ lakshminarayanan2017simple,simeone2022machine}.

A natural choice is to assign weight $1$ to observations below an uncertainty threshold $\kappa_j$ and weight $0$ to those above, yielding the hard thresholding rule
\begin{equation}\label{eq:binary_weight_unc}
w_j^{\mathrm{hard}}(X_i)
=
\begin{cases}
1, & M_i \le \kappa_j,\\
0, & \text{otherwise}
\end{cases}
\end{equation}
for $j=0, 1$.
A smoother alternative replaces the hard threshold with a sigmoid function, yielding the soft weighting rule
\begin{equation}\label{eq:soft_unc_weight}
w_j^{\mathrm{soft}}(X_i)
=
\sigma\left(\frac{\kappa_j-M_i}{\lambda}\right),
\quad \lambda>0,
\end{equation}
where $\sigma(\cdot)$ is the sigmoid function and hyperparameter $\lambda$ controls the sharpness of the transition. As $\lambda\to0$, the soft rule recovers the hard thresholding rule in~\eqref{eq:binary_weight_unc}.

The thresholds $\kappa_j$ for $j=0,1$ in~\eqref{eq:binary_weight_unc} should ideally capture the transition between typical uncertainty signals for clean samples and for corrupted samples on either side of the changepoint. 
Accordingly, we propose to use side-specific empirical quantiles of the uncertainty signals.
Formally, for a candidate changepoint $t$ and a quantile hyperparameter $\beta\in[0,1]$, we set
\begin{subequations}\label{eq:side_quantiles}
    \begin{align}
    \kappa_0(t,\beta)
    &=
    \mathrm{Q}_{1-\beta}\bigl(\{M_1,\dots,M_t\}\bigr), \\
    \text{and }\kappa_1(t,\beta)
    &=
    \mathrm{Q}_{1-\beta}\bigl(\{M_{t+1},\dots,M_n\}\bigr),
    \end{align}
\end{subequations}
where $Q_{1-\beta}(\mathcal{S})$ returns the $\lfloor (1-\beta)|\mathcal S| \rfloor$ smallest element in set $\mathcal{S}$.
The hyperparameter $\beta$ controls should ideally match the expected fraction $\varepsilon$ of contaminated observations.
Based on this, we recommend setting hyperparameter $\beta$ to one’s best conservative estimate of the contamination probability $\varepsilon$. 

\subsection{Coverage Properties of W-CONCH}
\label{sec:coverage_comparison_wconch}
Overall, specializing the CONCH set~\eqref{eq:cs}, the W-CONCH set is defined as follows.
\begin{definition}[W-CONCH Confidence Set]\label{def:wconch}
Given hyperparameters $\lambda>0$ and $\beta\in[0,1]$, which control the sharpness of the soft weighting rule~\eqref{eq:soft_unc_weight} and the quantile level for the side-specific uncertainty thresholds~\eqref{eq:side_quantiles}, respectively, W-CONCH scheme produces the confidence set
\begin{equation}\label{eq:wconch_confidence_set}
\begin{aligned}
\mathcal C_\alpha^w(\bX)
&=
\bigl\{t \in \{1,\ldots,n-1\}: p_t^w(\bX) > \alpha\bigr\}, \\
{\textrm{with}}~p_t^w(\bX)
&=
\frac{1}{|\Pi_t|}
\sum_{\pi_t \in \Pi_t}
\Ind\bigl[
S_t^w(\pi_t(\bX)) \leq S_t^w(\bX)
\bigr],
\end{aligned}
\end{equation}
where the weighted score $S_t^w(\bX)$ in~\eqref{eq:weighted_cusum} is implemented using the weights defined in~\eqref{eq:soft_unc_weight}--\eqref{eq:side_quantiles}.
\end{definition}
The following proposition summarizes the properties of the W-CONCH set~\eqref{eq:wconch_confidence_set}.
\begin{proposition}[Marginal coverage of W-CONCH]
\label{prop:weighted-coverage}
Under Assumption~\ref{asm:split-exch}, for any choice of hyperparameter $\lambda$ and $\beta$, the W-CONCH set
$\mathcal C_{\alpha}^w(\mathbf X)$ in~\eqref{eq:wconch_confidence_set} satisfies the marginal coverage condition
\begin{equation}\label{eq:weight_coverage}
\Pr\bigl(\xi\in \mathcal C_{\alpha}^w(\mathbf X)\bigr)\ge 1-\alpha.
\end{equation}
\end{proposition}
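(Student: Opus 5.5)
The plan is to reduce Proposition~\ref{prop:weighted-coverage} to the generic CONCH coverage argument, exactly as in Lemma~\ref{lem:coverage-contamination}. The key observation is that W-CONCH is simply CONCH run with a particular CPP score $S_t^w(\cdot)$. The guarantee of \cite[Thm.~3.1]{hore2026conformal} holds for \emph{any} measurable score function $S_t:\mathcal X^n\to\mathbb R$, provided the score is a fixed function of the input sequence. It is not required to be symmetric or otherwise permutation-invariant. I would therefore first verify that, for each $t$, the map $\bX\mapsto S_t^w(\bX)$ is a deterministic function of $\bX$ alone, with no dependence on $\xi$ or on the hidden indicators $Y_i$.

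This check runs through each ingredient of the score in turn.
\begin{itemize}
\item The classifier $\hat p(j\mid x)$, and hence $\Delta_i$ and the uncertainty signal $M_i=h(X_i)$, are trained offline on clean data independent of the batch. They are therefore fixed functions applied pointwise.
\item The thresholds $\kappa_0(t,\beta)$ and $\kappa_1(t,\beta)$ in \eqref{eq:side_quantiles} are empirical quantiles of $\{M_1,\dots,M_t\}$ and $\{M_{t+1},\dots,M_n\}$, computed from the sequence passed to the score.
\item The weights \eqref{eq:soft_unc_weight}, the weighted MLE $\hat\xi_t$ in \eqref{eq:weighted_mle} (with any fixed tie-breaking rule), and finally $S_t^w$ in \eqref{eq:weighted_cusum} are all deterministic functions of that sequence.
\end{itemize}
When the score is evaluated on $\pi_t(\bX)$, all of these quantities are recomputed from $\pi_t(\bX)$. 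This is exactly how \eqref{eq:wconch_confidence_set} is defined. The hyperparameters $\lambda$ and $\beta$ are fixed in advance and do not depend on $\bX$.

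With this in place, the permutation argument runs as follows. By Lemma~\ref{lem:coverage-exchangeability}, $\bX$ is split-exchangeable at $\xi$, so $\pi_\xi(\bX)\overset{d}{=}\bX$ for every $\pi_\xi\in\Pi_\xi$. Consequently, the multiset $\{S_\xi^w(\pi(\bX)):\pi\in\Pi_\xi\}$ is invariant under replacing $\bX$ by $\pi'(\bX)$, because $\Pi_\xi$ is a group. Conditional on this multiset, $S_\xi^w(\bX)$ is uniformly distributed over its elements. The standard rank argument then gives $\Pr(p_\xi^w(\bX)\le\alpha)\le\alpha$, and hence $\Pr(\xi\in\mathcal C_\alpha^w(\bX))\ge1-\alpha$. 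In the write-up, I would simply invoke \cite[Thm.~3.1]{hore2026conformal} with $S_t=S_t^w$.

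The main obstacle is the subtle point that the thresholds $\kappa_j(t,\beta)$ depend on the data. One might worry that this breaks validity. It does not, for two reasons. First, the quantiles are recomputed inside every permuted evaluation. Second, they are in fact invariant under $\Pi_t$, since each is a symmetric function of its side. So the score remains a fixed function of its argument, and no additional randomness is introduced.

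If the MC variant is used, I would note that coverage still holds by the usual argument that includes the identity permutation in the sampled set. I would also remark that the same reasoning covers the hard-threshold rule \eqref{eq:binary_weight_unc}.
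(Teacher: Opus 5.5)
Your proof is correct and follows the paper's route: Lemma~\ref{lem:coverage-exchangeability} supplies $\pi(\bX)\overset{d}{=}\bX$ for $\pi\in\Pi_\xi$, you check that the data-dependent thresholds $\kappa_j(t,\beta)$ do not interfere (the paper stresses their $\Pi_t$-invariance, which gives $w_j(\pi(\bX))=\pi(w_j(\bX))$), and the rank bound of \cite[Thm.~3.1]{hore2026conformal} finishes the argument. Your further point is also correct: the invariance is not strictly needed, because the score is recomputed on each permuted input and the group-relabeling step only requires $S_t^w$ to be a fixed measurable function not depending on $\xi$ or the $Y_i$; the invariance mainly justifies an implementation that computes the thresholds once from $\bX$ and reuses them across permutations.
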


\begin{proof}
As detailed in Appendix~\ref{app:proof-weighted-coverage}, the result follows from the split-permutation invariance of the weighted score~\eqref{eq:weighted_cusum} and the finite-sample rank argument of~\cite[Thm.~3.1]{hore2026conformal}.    
\end{proof}

\subsection{Meta-Learned Uncertainty Weights}
\label{subsec:meta_uncertainty_weighting}
The W-CONCH set~\eqref{eq:wconch_confidence_set} in Sec.~\ref{sec:weighting-construction} requires choosing the quantile hyperparameter $\beta$, as well as the threshold hyperparameter $\lambda$. 
As discussed in the previous subsection, when the contamination level $\varepsilon$ is known, the hyperparameter $\beta$ can be naturally set to $\varepsilon$, but a reasonable estimate of the probability $\varepsilon$ may not be available. 
Moreover, a pretrained uncertainty estimator $h(x)$ may not necessarily produce scores $M_i$ in~\eqref{eq:uncertainty_score} that are well aligned with the objective of minimizing the average set size $|\mathcal C_\alpha^w(\bX)|$. 
We address both issues by introducing a meta-learning strategy that optimizes a task-adapted uncertainty estimator from a collection of contaminated changepoint tasks, so that the induced weights directly minimize the size objective $|\mathcal C_\alpha^w(\bX)|$.
We refer to this meta-learned variant of W-CONCH as MW-CONCH.
Fig.~\ref{fig:meta_uncertainty_pipeline} illustrates the overall pipeline of MW-CONCH.

\begin{figure}[t]
    \centering
    \includegraphics[width=\linewidth]{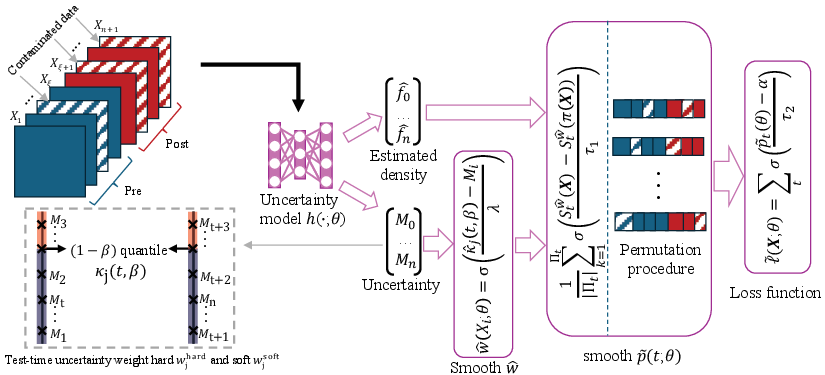}
    \caption{
    Operation of MW-CONCH: The uncertainty model $h(\cdot;\theta)$ produces per-observation uncertainty scores $M_i$, which are converted to weights via the smooth weighting rule. The weighted CPP score~{\eqref{eq:weighted_cpp_score}} and the permutation procedure of Sec.~{\ref{sec:preliminaries}} together yield a smoothed $p$-value, from which a differentiable surrogate of the confidence set size is computed as the training loss. Estimated pre- and post-change densities $\hat f_{j}$ enter the CPP score independently of $\theta$.
    }
    \label{fig:meta_uncertainty_pipeline}
\end{figure}
To start, MW-CONCH selects a parameterized uncertainty estimator $h(\cdot;\theta):\mathcal X\to\mathbb R_{\ge 0}$ with trainable parameters $\theta$, replacing the pretrained model $h$ in Sec.~{\ref{sec:weighting-construction}}. 
For example, the model $h(\cdot;\theta)$ may be 
a neural network trained via EDL~\cite{sensoy2018evidential}. 
EDL places a Dirichlet prior $\text{Dir}(\boldsymbol{e})$ over class probabilities and an uncertainty score $h(X_i;\theta) = J / \sum_{j=1}^{J} e_j$, where $J$ is the number of classes and $\boldsymbol{e} = g(X_i;\theta)$ are the predicted concentration parameters.
The goal is to optimize the parameter $\theta$ so as to minimize the average size of the confidence set $|\mathcal C_\alpha^w(\bX)|$.

To this end, we fix the distributions $f_0(\cdot)$ and $f_1(\cdot)$ of the clean data, and we assume to have access to data $\bX^k = (X_1^k,\ldots,X_n^k)$ generated i.i.d. from some distribution satisfying Assumption~\ref{asm:split-exch} and following the contamination model~\eqref{eq:contam_obs_model} for some randomly drawn contamination level $\varepsilon_{\mathrm{tr}}\sim \mu$ from some distribution $\mu$.
The training distribution $\mu$ is itself a design choice: it may concentrate on a single reference contamination level $\varepsilon_{\mathrm{tr}}$, which is referred to as fixed-$\varepsilon_{\mathrm{tr}}$ meta-learning, or it may spread mass over multiple levels, which is referred to as mixed-$\varepsilon_{\mathrm{tr}}$ meta-learning. 
The meta-training loss is the average confidence set size of the W-CONCH set~\eqref{eq:wconch_confidence_set} over the samples $\{\bX^{k}\}_{k=1}^K$:
\begin{equation}\label{eq:meta-loss}
\mathcal L(\theta)
=
\frac{1}{K}\sum_{k=1}^K
\left|
\mathcal C_{\alpha;\theta}^{w}(\bX^{k})
\right|,
\end{equation}
where we have made explicit the dependence of the W-CONCH set $\mathcal C_{\alpha;\theta}^{w}(\bX)$ on the parametric uncertainty score $h(\cdot;\theta)$.

The meta-learning loss in~\eqref{eq:meta-loss} is not directly differentiable and a smooth approximation is derived in Appendix~\ref{app:algorithm} by following an approach similar to~\cite{park2023few,stutz2021learning}. 
MW-CONCH optimizes this differentiable objective via gradient descent.

\subsection{Incorporating Prior Information on the Changepoint Location}
\label{sec:wconch_prior}
When prior information is available on which candidate locations are more likely to be the true changepoint $\xi$, it can be used to further concentrate the confidence set. We encode the prior as weights $v_1,\dots,v_{n-1}\ge 0$ over the candidate locations. The weights are normalized as $\sum_{t=1}^{n-1}v_t=n-1$, so that values $v_t>1$ correspond to locations deemed more likely than under a uniform prior. Unlike the weights $w_0(X_i),w_1(X_i)$ in~\eqref{eq:weighted_cpp_score}, which act on the observations, the prior weights $\{v_t\}_{t=1}^{n-1}$ act on the candidate locations, entering through the significance level used to test each candidate.

Specifically, following the level-allocation strategy of weighted hypothesis testing~\cite{genovese2006false}, each candidate $t$ is tested at the level
\begin{equation}\label{eq:prior_level}
    \alpha_t=\min\left(\frac{\alpha}{v_t},\,\alpha_{\max}\right),
\end{equation}
where $\alpha_{\max}\in[\alpha,1)$ bounds the level applied to any single location, yielding the prior-informed confidence set
\begin{equation}\label{eq:prior_confidence_set}
    \mathcal C_{\alpha,v}^{w}(\bX)=\bigl\{t\in\{1,\dots,n-1\}: p_t^w(\bX)>\alpha_t\bigr\},
\end{equation}
with the $p$-value $p_t^w(\bX)$ computed as in~\eqref{eq:wconch_confidence_set}.
By~\eqref{eq:prior_level}, a priori likely locations are tested at smaller levels, and are thus harder to exclude, while unlikely locations are more easily discarded. Under a uniform prior, i.e., $v_t=1$ for all $t$, all levels reduce to $\alpha_t=\alpha$, recovering the W-CONCH set~\eqref{eq:wconch_confidence_set}.

\begin{proposition}[Coverage of prior-informed W-CONCH]
\label{prop:prior-coverage}
Under Assumption~\ref{asm:split-exch}, the set $\mathcal C_{\alpha,v}^{w}(\bX)$ in~\eqref{eq:prior_confidence_set} satisfies 
\begin{equation}\label{eq:prior_avg_coverage}
 \Pr\bigl(\xi\in\mathcal C_{\alpha,v}^{w}(\bX)\bigr)\ge 1-\alpha_{\max}   
\end{equation}
for any changepoint $\xi$. 
\end{proposition}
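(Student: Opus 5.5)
The plan is to reduce the claim to the super-uniformity of the weighted $p$-value at the true changepoint and then use that only the single level $\alpha_\xi$ matters. Since $\xi$ is a fixed (deterministic) index, the event $\{\xi\notin\mathcal C_{\alpha,v}^{w}(\bX)\}$ coincides with the event $\{p_\xi^w(\bX)\le\alpha_\xi\}$, where $\alpha_\xi$ is the level assigned to $\xi$ in \eqref{eq:prior_level}. The prior weights $\{v_t\}$ are fixed before observing $\bX$, so $\alpha_\xi$ is a deterministic number. By construction, $\alpha_\xi=\min(\alpha/v_\xi,\alpha_{\max})\le\alpha_{\max}$, whatever the value of $v_\xi\ge 0$, including $v_\xi=0$, where the level equals $\alpha_{\max}$.

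The key step is to show that $\Pr(p_\xi^w(\bX)\le u)\le u$ for every $u\in[0,1]$. I would obtain this exactly as in Proposition~\ref{prop:weighted-coverage}. First, by Lemma~\ref{lem:coverage-exchangeability}, the observed sequence $\bX$ is split-exchangeable at $\xi$. Second, the weighted score $S_\xi^w$ in \eqref{eq:weighted_cusum}, with weights \eqref{eq:soft_unc_weight}--\eqref{eq:side_quantiles}, is a deterministic function whose permutation distribution is handled by the rank argument of \cite[Thm.~3.1]{hore2026conformal}. Concretely, $\pi_\xi(\bX)\overset{d}{=}\bX$ for every $\pi_\xi\in\Pi_\xi$, and $\Pi_\xi$ is a group. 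Hence $p_\xi^w(\bX)$ is the normalized rank of $S_\xi^w(\bX)$ among the values $\{S_\xi^w(\pi(\bX))\}_{\pi\in\Pi_\xi}$, and this rank is super-uniform with ties counted conservatively. Note that the side-specific quantiles $\kappa_j(\xi,\beta)$ are invariant under $\Pi_\xi$, since they depend only on the multisets $\{M_1,\dots,M_\xi\}$ and $\{M_{\xi+1},\dots,M_n\}$. However, invariance of the score is not needed for the rank argument; only the orbit structure is used.

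Combining the two steps gives
\begin{equation*}
\Pr\bigl(\xi\notin\mathcal C_{\alpha,v}^{w}(\bX)\bigr)=\Pr\bigl(p_\xi^w(\bX)\le\alpha_\xi\bigr)\le\alpha_\xi\le\alpha_{\max},
\end{equation*}
which is \eqref{eq:prior_avg_coverage}.

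The only delicate point is super-uniformity at an arbitrary level $u$, rather than only at $u=\alpha$ as stated in Proposition~\ref{prop:weighted-coverage}. I expect this to go through immediately, because the proof of \cite[Thm.~3.1]{hore2026conformal} does not depend on the particular value of $\alpha$, so the appendix argument applies verbatim with $\alpha$ replaced by $\alpha_\xi$. For the MC variant, I would apply the same argument to the randomized rank that includes the identity permutation.
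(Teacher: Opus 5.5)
Your proposal is correct and follows the paper's proof essentially step for step: you establish super-uniformity of $p_\xi^w(\bX)$ at an arbitrary level (the rank bound behind Proposition~\ref{prop:weighted-coverage} does not depend on the threshold), then conclude $\Pr(p_\xi^w(\bX)\le\alpha_\xi)\le\alpha_\xi\le\alpha_{\max}$. Your side remark is also correct: permutation-invariance of the thresholds is not actually needed, because $S_\xi^w$ is recomputed on each $\pi(\bX)$ as a fixed function of the data, so only the group and exchangeability structure enters the rank argument.
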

\begin{proof}
See Appendix~\ref{app:proof-prior-coverage}.    
\end{proof}

The parameter $\alpha_{\max}$ controls the influence of the prior. The worst-case bound $1-\alpha_{\max}$ in~\eqref{eq:prior_avg_coverage} protects against misspecification of the prior, with the choice $\alpha_{\max}=\alpha$ recovering the guarantee~\eqref{eq:weight_coverage} of Proposition~\ref{prop:weighted-coverage}.

\section{Weighted Conformal Root-Cause Localization with Contaminated Observations}
\label{sec:wcroc}

In this section, we introduce an extension of W-CONCH to the multi-stream root-cause localization setting under contaminated observations described in Sec.~\ref{subsec:multi_stream_contamination_model}.
The corresponding framework, referred to as W-CROC, constructs a confidence set $\mathcal{K}_{\alpha}(\bX)$ for the root-cause index by applying the CROC scheme presented in~\cite{hore2026distribution} with a weighted CPP score analogous to W-CONCH.

\subsection{Conformal Root Cause Analysis}
\label{subsec:croc}

Unlike the single-stream setting studied in the previous section, the unknown changepoint is now a vector
$\boldsymbol{\xi}=(\xi_1,\ldots,\xi_D)$, and the root-cause index $d^{\star}$ in~\eqref{eq:root_caused_confidence_set} is determined by the stream with the earliest changepoint.
Let $\bt=(t_1,\ldots,t_D)$ be a candidate changepoint configuration in a feasible set $\mathcal R$.
The set $\mathcal R$ encodes prior structural constraints on the changepoints that may be a priori known to the detector. 
For instance, set $\mathcal R$ may consist of configurations in which a single stream changes first, i.e.,
$ \mathcal R = \{(t_1,\ldots,t_D): \exists d\in\{1,\ldots,D\}\ \text{such that}\ t_d<t_m,\ \forall m\ne d\}$.
For each candidate configuration $\bt\in\mathcal R$, CROC evaluates a CPP score $S_{\bt}(\bX)$ that measures the plausibility of $\bt$ being the true changepoint configuration.

To calibrate this score, CROC applies split permutations within each stream $d$ using the candidate changepoint $t_d$.
Let $\Pi_{\bt}$ denote the corresponding split-permutation group, where each stream $d$ is permuted only within its candidate pre- and post-change segments determined by candidate $t_d$.
The configuration-level conformal $p$-value is defined as
\begin{equation}
\label{eq:croc_config_pvalue}
p_{\bt}
=
\frac{1}{|\Pi_{\bt}|}
\sum_{\pi\in\Pi_{\bt}}
\mathbf 1
\left\{
S_{\bt}(\pi(\bX))
\le
S_{\bt}(\bX)
\right\}.
\end{equation}
In practice, the average in~\eqref{eq:croc_config_pvalue} can be approximated using MC split permutations as explained in Sec.~\ref{subsec:conformal_changepoint_localization}.

To obtain a confidence set for the root-cause index, CROC aggregates the configuration-level $p$-values over all configurations that are consistent with each candidate root stream.
To elaborate, for each $d\in\{1,\ldots,D\}$, define the set
$\mathcal I_d=\{\bt\in\mathcal R : t_d < t_m \text{ for all } m\neq d\},$
which contains all feasible configurations under which stream $d$ is the root-cause stream.
The root-cause $p$-value for stream $d$ is then computed by taking the maximum over all feasible configurations in which stream $d$ changes first across all other streams, i.e.,
\begin{equation}
\label{eq:croc_root_pvalue}
p_{(d)}
=
\max_{\bt\in\mathcal I_d}
p_{\bt}.
\end{equation}
Then, stream $d$ is included in the confidence set $\mathcal{K}_{\alpha}(\bX)$ if at least one feasible configuration in which stream $d$ changes first is not rejected.
Formally, given a miscoverage level $\alpha\in(0,1)$, the CROC confidence set for the root-cause index is
\begin{equation}
\label{eq:croc_confidence_set}
\mathcal K_{\alpha}(\bX)
=
\{d\in\{1,\ldots,D\}: p_{(d)}>\alpha\}.
\end{equation}

\subsection{Weighted Conformal Root Cause Analysis}
\label{subsec:weighted_croc_score}
W-CROC specializes CROC by replacing the CPP score with the weighted classifier-based score developed in Sec.~\ref{sec:method}, in order to account for the contaminated observations in~\eqref{eq:contam_obs_stream_model}.
Assume access to a pre-trained classifier for pre-change and post-change samples for each stream $d$, so that a log-posterior ratio $\Delta_{d,i}$, as in~\eqref{eq:classifier_log_posterior_contrast}, can be computed for each stream $d$.
\begin{definition}[Weighted CROC Score]\label{def:weighted_croc}
Given existing weights $w_{d,j}:\mathcal X\to[0,1]$ for $d=1,\ldots,D$, the configuration level weighted CPP score for a candidate changepoint configuration $\bt=(t_1,\ldots,t_D)$$\in \mathcal R$ is defined as
\begin{equation}
\label{eq:weighted_croc_score}
S_{\bt}^{w}(\bX)
=
\sum_{d=1}^{D}
S_{t_d}^{w}(\bX_d),
\end{equation}
where $S_{t_d}^{w}(\bX_d)$ is the single-stream weighted CPP score~\eqref{eq:weighted_cusum} applied to stream $d$, which is given by
\begin{equation}
\label{eq:weighted_croc_stream_score}
S_{t_d}^{w}(\bX_d)
=
\sum_{i\le t_d} w_{d,0}(X_{d,i})\Delta_{d,i}
-
\sum_{i\le \hat\xi_{t_d}(\bX_d)} w_d^{(t_d)}(X_{d,i})\Delta_{d,i},
\end{equation}
with $w_d^{(t_d)}(X_{d,i}) = w_{d,0}(X_{d,i})$ for $i\le t_d$ and $w_d^{(t_d)}(X_{d,i}) = w_{d,1}(X_{d,i})$ for $i>t_d$, and $\hat\xi_{t_d}(\bX_d)$ defined as in~\eqref{eq:weighted_mle}, applied to stream $d$.
\end{definition}

The weights in Definition~\ref{def:weighted_croc} can be constructed using the hard or soft uncertainty-based weighting rules introduced in Sec.~\ref{sec:weighting-construction}, or learned through the meta-learning procedure described in Sec.~\ref{subsec:meta_uncertainty_weighting}. 
When the uncertainty model is optimized via the meta-learning procedure, we refer to the resulting method as meta-learned weighted CROC (MW-CROC).
The corresponding differentiable training objective and optimization procedure are summarized in Algorithm~\ref{alg:meta_learning} in Appendix~\ref{app:algorithm}.

\begin{proposition}[Coverage of W-CROC]
\label{prop:wcroc-coverage}
Under Assumption~\ref{assump:multi_stream_split_exchangeability}, the W-CROC confidence set $\mathcal K_{\alpha}(\bX)$ in~\eqref{eq:croc_confidence_set}, computed with the weighted score~\eqref{eq:weighted_croc_score}, satisfies the marginal coverage condition
\begin{equation}
\Pr\bigl(d^\star\in\mathcal K_{\alpha}(\bX)\bigr)\ge 1-\alpha.    
\end{equation}
\end{proposition}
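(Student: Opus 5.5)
The plan is to reduce coverage of the root-cause set to validity of a single configuration-level $p$-value at the true configuration $\boldsymbol{\xi}$. I assume, as CROC implicitly does, that the true configuration lies in the feasible set, $\boldsymbol{\xi}\in\mathcal R$. Since $d^\star$ is the unique minimizer of $\xi_d$, we have $\boldsymbol{\xi}\in\mathcal I_{d^\star}$. By~\eqref{eq:croc_root_pvalue} this gives $p_{(d^\star)}\ge p_{\boldsymbol{\xi}}$, hence
\[
\Pr\bigl(d^\star\notin\mathcal K_\alpha(\bX)\bigr)=\Pr\bigl(p_{(d^\star)}\le\alpha\bigr)\le \Pr\bigl(p_{\boldsymbol{\xi}}\le\alpha\bigr).
\]
It therefore suffices to show that $p_{\boldsymbol{\xi}}$ computed with the weighted score~\eqref{eq:weighted_croc_score} is a valid $p$-value, i.e.\ $\Pr(p_{\boldsymbol{\xi}}\le\alpha)\le\alpha$.

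The first step is to establish joint invariance of the observed data under the product group $\Pi_{\boldsymbol{\xi}}$. I would apply Lemma~\ref{lem:coverage-exchangeability} stream by stream: under Assumption~\ref{assump:multi_stream_split_exchangeability}, each corrupted stream $\bX_d$ is split-exchangeable at $\xi_d$. This uses the fact that $(Y_{d,i},Z_{d,i})$ are i.i.d.\ and independent of $\widetilde\bX_d$, so the triples can be permuted jointly. The clean streams are mutually independent, and I take the contamination variables to be independent across streams, as the stream-wise model~\eqref{eq:contam_obs_stream_model} indicates. Then the streams $\bX_1,\ldots,\bX_D$ are independent, and for any $\pi=(\pi^{(1)},\ldots,\pi^{(D)})\in\Pi_{\boldsymbol{\xi}}$ we get $\pi(\bX)\overset{d}{=}\bX$, because the joint law factorizes and each factor is invariant.

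The second step is the standard group-invariance rank argument, as in \cite[Thm.~3.1]{hore2026conformal} and Proposition~\ref{prop:weighted-coverage}. Because $\Pi_{\boldsymbol{\xi}}$ is a group, $\pi(\bX)\overset{d}{=}\bX$ for every $\pi$ implies that, conditionally on the orbit $\{\pi(\bX):\pi\in\Pi_{\boldsymbol{\xi}}\}$, the observed $\bX$ is uniformly distributed over the orbit, counted with multiplicity. Write $\tilde S=S^w_{\boldsymbol{\xi}}$ for the weighted score. Then $p_{\boldsymbol{\xi}}$ is the normalized rank of $\tilde S(\bX)$ among the values $\{\tilde S(\pi\bX)\}$, which satisfies $\Pr(p_{\boldsymbol{\xi}}\le\alpha)\le\alpha$ for any fixed function $\tilde S$, ties included. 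The MC variant follows in the same way, by treating the observed sequence as exchangeable with the sampled permutations.

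The main point requiring care is that $\tilde S$ must be a fixed measurable function of $\bX$ at the tested configuration. The data-dependent ingredients must not break the argument: the side-specific quantile thresholds $\kappa_{d,j}(t_d,\beta)$ of~\eqref{eq:side_quantiles}, the weighted MLE $\hat\xi_{t_d}$, and meta-learned parameters $\theta$. The rank argument only requires that the same deterministic map be applied to $\bX$ and to each $\pi(\bX)$. For the quantile thresholds this holds as is, since they are functions of $\bX$ and the candidate $t_d$. In fact, they are invariant under $\Pi_{\boldsymbol{\xi}}$, because empirical quantiles of $\{M_{d,i}\}$ within each segment are permutation-invariant, though this invariance is not even needed. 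The classifiers and $\theta$ must be trained on data independent of the current batch, which the paper assumes, so they can be conditioned on. With these checks the bound $\Pr(p_{\boldsymbol{\xi}}\le\alpha)\le\alpha$ holds, and combining it with the reduction in the first paragraph yields the claim.
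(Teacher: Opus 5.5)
Your argument is correct. Its skeleton is what the paper's appeal to the CROC validity proof rests on: reduce to validity of the configuration-level $p$-value at the true configuration, then apply the group-invariance rank argument. The paper organizes it differently, however.

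\textbf{The paper's route.} Its proof is two sentences. It names permutation-equivariance of the weighted score~\eqref{eq:weighted_croc_score} as the property to check, on the grounds that the stream-wise quantile thresholds are invariant under within-stream permutations. It then invokes~\cite{hore2026distribution} as a black box.

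\textbf{Your route.} You open the black box instead.
\begin{itemize}
\item You make explicit the step $p_{(d^\star)}\ge p_{\boldsymbol{\xi}}$, together with the assumption $\boldsymbol{\xi}\in\mathcal R$ that it requires. The proposition does not state this assumption.
\item You verify the hypothesis CROC actually needs for the observed data, namely joint $\Pi_{\boldsymbol{\xi}}$-invariance of the contaminated streams. You do this via Lemma~\ref{lem:coverage-exchangeability} applied per stream, plus cross-stream independence of clean data and contamination. The paper leaves this step implicit.
\item Your remark that threshold invariance is not needed is also right. Relabeling $\pi'\circ\pi$ as $\pi'$ in~\eqref{eq:cov_final} only uses that the same map $S^w_{\boldsymbol{\xi}}$ is applied to $\bX$ and to every $\pi(\bX)$. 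So your version covers any weighting rule computed by a fixed procedure, even a position-dependent one, provided the classifier and $\theta$ are independent of the test batch.
\end{itemize}

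\textbf{What each buys.} Your version is self-contained and more general. The paper's equivariance observation gives an implementation guarantee rather than a validity one. Computing the weights once and carrying them along with the permuted observations yields the same $p$-value as recomputing the score on $\pi(\bX)$.

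\textbf{Minor caveat.} Your Monte Carlo aside holds for the version that counts the observed data among the draws, i.e., $\bigl(1+\sum_{b}\Ind[S_{\boldsymbol{\xi}}(\pi_b(\bX))\le S_{\boldsymbol{\xi}}(\bX)]\bigr)/(B+1)$. A plain average over $B$ i.i.d.\ sampled permutations can be slightly anti-conservative.
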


\begin{proof}
The weighted score~\eqref{eq:weighted_croc_score} is permutation-equivariant under the split-permutation group $\Pi_{\bt}$, since each stream's weights depend on $\bX_d$ only through stream-wise thresholds that are invariant under within-stream permutations, exactly as in the proof of Proposition~\ref{prop:weighted-coverage}. The coverage guarantee therefore follows directly from the validity proof of CROC~\cite{hore2026distribution}, since the weighted score preserves the required permutation-equivariance.    
\end{proof}

\subsection{Incorporating Prior Information on the Root Cause}
\label{sec:wcroc_prior}
When prior information is available on which streams are more likely to be the root cause $d^\star$, it can be incorporated via the level-allocation strategy described in Sec.~\ref{sec:wconch_prior}. The prior is encoded as weights $v_1,\dots,v_D\ge 0$ over the candidate streams, which are normalized as $\sum_{d=1}^{D}v_d=D$, so that values $v_d>1$ correspond to streams deemed more likely than under a uniform prior.

As in~\eqref{eq:prior_level}, each stream $d$ is tested at the level
\begin{equation}\label{eq:prior_level_rc}
    \alpha_d=\min\left(\frac{\alpha}{v_d},\,\alpha_{\max}\right),
\end{equation}
with $\alpha_{\max}\in[\alpha,1)$, yielding the prior-informed root-cause confidence set
\begin{equation}\label{eq:prior_confidence_set_rc}
    \mathcal K_{\alpha,v}(\bX)=\{d\in\{1,\dots,D\}: p_{(d)}>\alpha_d\},
\end{equation}
where the root-cause $p$-value $p_{(d)}$ in~\eqref{eq:croc_root_pvalue} is computed with the weighted score~\eqref{eq:weighted_croc_score}.

By~\eqref{eq:prior_level_rc}, a priori likely streams are tested at smaller levels, and are thus harder to exclude, while unlikely streams are more easily discarded. Under a uniform prior, i.e., $v_d=1$ for all $d$, all levels reduce to $\alpha_d=\alpha$, recovering the W-CROC set~\eqref{eq:croc_confidence_set}, and the role of $\alpha_{\max}$ is as discussed in Sec.~\ref{sec:wconch_prior}.

\begin{proposition}[Coverage of prior-informed W-CROC]
\label{prop:prior-coverage-rc}
Under Assumption~\ref{assump:multi_stream_split_exchangeability}, the set $\mathcal K_{\alpha,v}(\bX)$ in~\eqref{eq:prior_confidence_set_rc} satisfies
\begin{equation}
\Pr\bigl(d^\star\in\mathcal K_{\alpha,v}(\bX)\bigr)\ge 1-\alpha_{\max}  
\end{equation}
for any root cause $d^\star$. 
\end{proposition}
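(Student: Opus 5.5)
The proof will mirror that of Proposition~\ref{prop:prior-coverage}: reduce the event of missing the root cause to a single test at the true configuration, and bound that test's level by $\alpha_{\max}$. Let $\boldsymbol\xi=(\xi_1,\ldots,\xi_D)$ be the true changepoint vector, and assume, as in the CROC setting, that $\boldsymbol\xi\in\mathcal R$. Since the minimizer $d^\star$ in~\eqref{eq:root_cause_index} is unique, we have $\xi_{d^\star}<\xi_m$ for all $m\neq d^\star$. Hence $\boldsymbol\xi\in\mathcal I_{d^\star}$, and by~\eqref{eq:croc_root_pvalue} $p_{(d^\star)}\ge p_{\boldsymbol\xi}$. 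It follows that
\begin{equation*}
\{d^\star\notin\mathcal K_{\alpha,v}(\bX)\}=\{p_{(d^\star)}\le\alpha_{d^\star}\}\subseteq\{p_{\boldsymbol\xi}\le\alpha_{d^\star}\}.
\end{equation*}
The key step is therefore to show that $p_{\boldsymbol\xi}$, computed with the weighted score~\eqref{eq:weighted_croc_score}, is a valid $p$-value:
\begin{equation*}
\Pr(p_{\boldsymbol\xi}\le u)\le u\quad\text{for all } u\in[0,1].
\end{equation*}

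To establish validity, I would argue as in the proof of Proposition~\ref{prop:wcroc-coverage}. First, applying Lemma~\ref{lem:coverage-exchangeability} stream-wise, each contaminated stream $\bX_d$ is split-exchangeable at $\xi_d$. The contamination in~\eqref{eq:contam_obs_stream_model} is independent across streams, and the clean streams are independent by Assumption~\ref{assump:multi_stream_split_exchangeability}. Hence the joint law of $\bX$ is invariant under every $\pi\in\Pi_{\boldsymbol\xi}$. Second, the weighted score $S^w_{\boldsymbol\xi}$ is a fixed measurable function whose side-specific thresholds~\eqref{eq:side_quantiles} depend only on the multisets of uncertainty signals in each segment, and these multisets are invariant under $\Pi_{\boldsymbol\xi}$. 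The standard group-invariance rank argument then applies. Conditionally on the orbit $\{\pi(\bX):\pi\in\Pi_{\boldsymbol\xi}\}$, the observed $\bX$ is uniformly distributed over the orbit (with multiplicities). The fraction of group elements whose score does not exceed the observed score is then super-uniform, which gives $\Pr(p_{\boldsymbol\xi}\le u)\le u$. This is exactly the validity result of CROC~\cite{hore2026distribution} that was already invoked for Proposition~\ref{prop:wcroc-coverage}, so I would cite it rather than redo it.

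Finally, the level $\alpha_{d^\star}=\min(\alpha/v_{d^\star},\alpha_{\max})\le\alpha_{\max}$ is deterministic, because the prior weights $v_d$ are fixed before seeing the data. Combining the two displays above yields
\begin{equation*}
\Pr(d^\star\notin\mathcal K_{\alpha,v}(\bX))\le\Pr(p_{\boldsymbol\xi}\le\alpha_{d^\star})\le\alpha_{d^\star}\le\alpha_{\max}.
\end{equation*}
The bound holds for any fixed $\boldsymbol\xi$, and hence for any root cause $d^\star$.

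I expect the main obstacle to be the validity step. The feasibility condition $\boldsymbol\xi\in\mathcal R$ must be stated explicitly. More delicately, with the MC variant or a non-group permutation set, the rank argument requires the identity permutation to be included among the sampled permutations. Data-dependent weights must also remain invariant under $\Pi_{\boldsymbol\xi}$. For the meta-learned uncertainty model $h(\cdot;\theta)$, this invariance holds provided $\theta$ is trained on data independent of $\bX$. The remaining steps are bookkeeping.
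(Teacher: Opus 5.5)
Your proposal is correct and follows the paper's route. The paper asserts that $p_{(d^\star)}$ is super-uniform by the proof of Proposition~\ref{prop:wcroc-coverage} and then uses $\alpha_{d^\star}\le\alpha_{\max}$ as in Appendix~\ref{app:proof-prior-coverage}; you additionally spell out the step the paper leaves implicit, namely that $\boldsymbol\xi\in\mathcal R$ gives $\boldsymbol\xi\in\mathcal I_{d^\star}$ and hence $p_{(d^\star)}\ge p_{\boldsymbol\xi}$, where $p_{\boldsymbol\xi}$ is valid by the group-invariance rank argument.
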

\begin{proof}
By the proof of Proposition~\ref{prop:wcroc-coverage}, under $d^\star=d$ the $p$-value $p_{(d)}$ is super-uniform, i.e., $\Pr\{p_{(d)}\le c\}\le c$ for any constant $c\in[0,1]$. Both bounds then follow as in Appendix~\ref{app:proof-prior-coverage}, with $t$, $\xi$, and $n-1$ replaced by $d$, $d^\star$, and $D$.    
\end{proof}

\section{Experiments}
\label{sec:experiments}

\begin{figure*}[t]
    \centering
    \includegraphics[width=\linewidth]{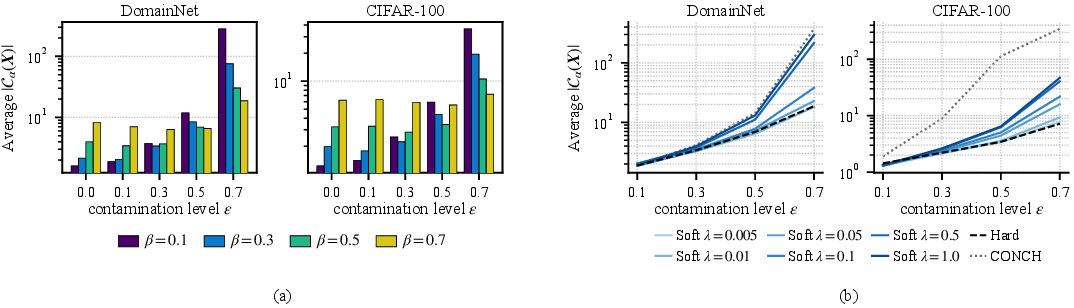}
    \caption{
    W-CONCH on DomainNet (left panels) and CIFAR-100 (right panels).
    (a): Average confidence set size under hard uncertainty thresholding~\eqref{eq:binary_weight_unc} for different quantile hyperparameters $\beta$.
    The results support the heuristic that hyperparameter $\beta$ should be chosen matched to the contamination level $\varepsilon$.
    (b): W-CONCH with soft weighting~\eqref{eq:soft_unc_weight} with $\beta=\varepsilon$ and different temperatures $\lambda$, compared with hard W-CONCH and CONCH~\cite{hore2026conformal}.
    }
    \label{fig:uncertainty_results}
\end{figure*}

We evaluate W-CONCH and W-CROC on image-based and real-world changepoint and root-cause localization benchmarks. For changepoint localization, we use DomainNet
(real-to-sketch domain shift), a CIFAR-100 benchmark (a shift between image classes), and the Milan telecom dataset~\cite{barlacchi2015multi} (a real shift in daily cellular activity at the holiday onset). 
For root-cause localization, we construct a multi-stream benchmark based on CIFAR-100, in which each stream undergoes a class shift at a stream-specific changepoint. In all settings, contamination is introduced by perturbing a random subset of observations according to the contamination model~\eqref{eq:contam_obs_model}.
Full implementation details, additional MNIST experiments, and complete numerical results are provided in Appendix~\ref{app:experiments}.

\subsection{Experimental Setup}
\label{subsec:experimental_setup}

\subsubsection{Datasets and Evaluation}\textbf{Changepoint localization.}
For DomainNet, sequences have length $n=800$ with changepoint $\xi=350$, where the first $\xi$ observations are from the real domain and the rest from the sketch domain. For CIFAR-100, sequences have length $n=400$ with $\xi=250$, shifting from class $99$ (worm) to class $77$ (snail). Observations are independently contaminated with probability $\varepsilon$ by Gaussian blur with $\sigma=20$ pixels for DomainNet and Gaussian noise with $\sigma=0.3$ pixels for CIFAR-100.

In the Milan telecom dataset, each sequence collects the daily activity profiles of one grid cell from call detail records recorded over Milan between November~2013 and January~2014, where each observation is a $720$-dimensional daily profile of $144$ ten-minute intervals across five call detail record measurements. Each sequence has $n=63$ samples with changepoint at $\xi=54$, corresponding to December~24, 2013. Observations are contaminated with probability $\varepsilon$ by Gaussian noise with $\sigma=20$ on standardized features.\\
\textbf{Root-cause localization.}
For CIFAR-100, we build a multi-stream benchmark with $D=5$ streams, each shifting between two classes within a common CIFAR-100 superclass (e.g., cattle to chimpanzee). Stream~1 is the root-cause stream and changes at $t_1=150$, while the remaining streams change at $t_d=155$ for $d=2,\ldots,5$. Each stream contains $n=400$ observations. The feasible set $\mathcal R$ contains one configuration per candidate root stream, assigning $t_d=150$ to that stream and $t_m=155$ to all others. Observations are contaminated as in the changepoint localization setup.\\
\textbf{Train and test pools.}
For all datasets, training and evaluation tasks are drawn from disjoint data pools, so that meta-learning never sees observations used at evaluation.\\
\textbf{Evaluation.}
We vary the contamination level over $\varepsilon \in \{0.0,0.1,0.3,0.5,0.7\}$. For changepoint localization, we report the average confidence set size $|\mathcal C_{\alpha}(\bX)|$ and empirical coverage over $200$ test tasks at $\alpha=0.05$, with $400$ split permutations per $p$-value. For root-cause localization, we evaluate $\mathcal K_{\alpha}(\bX)$ at $\alpha=0.01$ with $100$ split permutations. Since the MC approximation can leave all $D$ streams below $\alpha$, yielding an empty set, we report the penalized size $|\mathcal K_{\alpha}(\bX)|_{\mathrm{pen}}$, assigning size $D$ to empty realizations. When explicitly stated, we will also evaluate the prior-informed variant from Sec.~\ref{sec:wcroc_prior}, assigning weights $v_d=v>1$ to the $k$ most likely streams and $v_d'=({D-kv})/({D-k})$ for all other streams. The true root stream is always included among the $k$ likely streams, indicating a well specified prior.

\subsubsection{Baselines}
We compare the proposed methods against an unweighted baseline that ignores contamination and an oracle baseline with access to the true contamination of each observation:
\begin{itemize}
    \item CONCH and CROC~\cite{hore2026conformal,hore2026distribution}: The original methods that ignore contamination, as described in Sec.~\ref{subsec:conformal_changepoint_localization} and Sec.~\ref{subsec:croc}, respectively.
    \item Oracle-based W-CONCH and W-CROC: These methods use the true contamination indicators $Y_i$ in~\eqref{eq:contam_obs_model} to define oracle weights $w_i^{\mathrm{orc}}=1-Y_i$, assigning weight $1$ to clean observations and weight $0$ to contaminated observations.
\end{itemize}

\begin{figure*}[t]
    \centering
    \includegraphics[width=\linewidth]{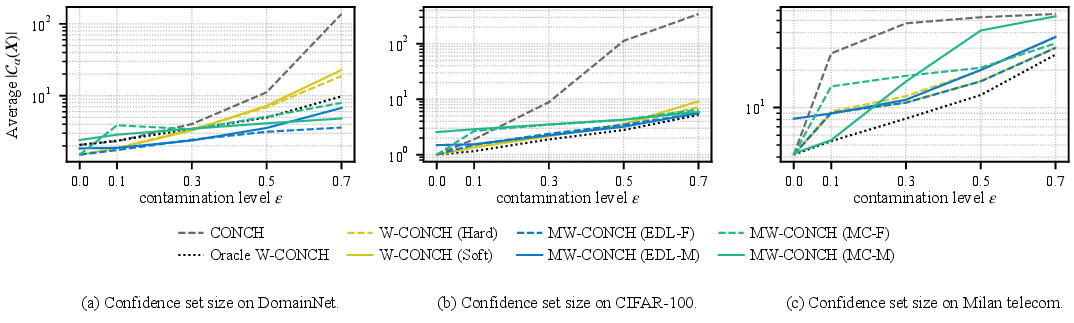}
    \caption{
    Performance against the contamination levels $\varepsilon$ for CONCH~\cite{hore2026conformal}, oracle-based W-CONCH, W-CONCH with hard and soft weighting, and MW-CONCH using EDL or MC dropout uncertainty models (EDL-F and MC-F denote fixed-$\varepsilon_{\mathrm{tr}}$ meta-learning with $\beta=\varepsilon$, while EDL-M and MC-M denote mixed-$\varepsilon_{\mathrm{tr}}$ meta-learning with $\beta=0.3$):
    (a) Average confidence set size $|\mathcal C_{\alpha}(\bX)|$ on DomainNet.
    (b) Average confidence set size $|\mathcal C_{\alpha}(\bX)|$ on CIFAR-100.
    (c) Average confidence set size $|\mathcal C_{\alpha}(\bX)|$ on the Milan telecom dataset.
    }
    \label{fig:overall_results}
\end{figure*}

\begin{figure}[t]
    \centering
    \includegraphics[width=\linewidth]{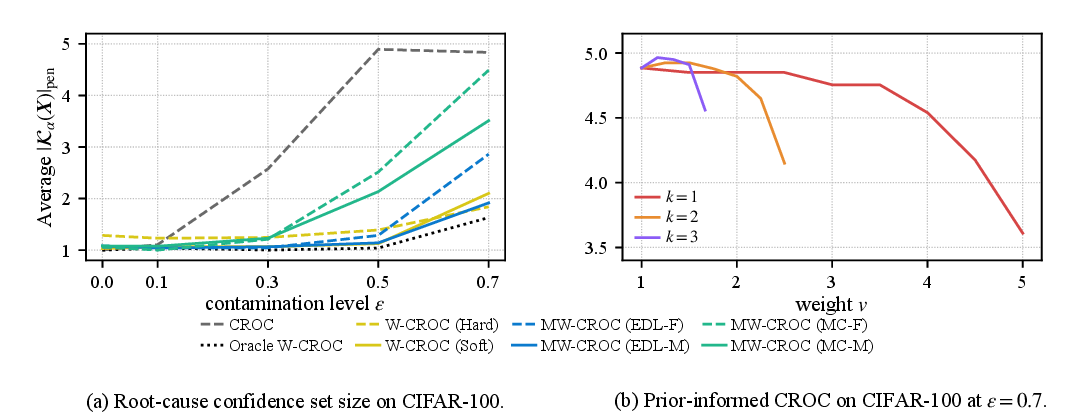}
    \caption{
        W-CROC and prior-informed CROC on CIFAR-100 with $D=5$ streams, $\sigma=0.3$, $n=400$, and $\alpha=0.01$.
        (a) Average penalized confidence set size $|\mathcal K_{\alpha}(\bX)|_{\mathrm{pen}}$ across contamination levels.
        (b) Effect of the prior weight $v$ on the penalized set size at $\varepsilon=0.7$ with $\alpha_{\max}=0.15$, where $k$ denotes the number of streams deemed likely by the prior.
        }
    \label{fig:wcroc_results}
\end{figure}
\subsection{Results}
\label{subsec:results}
\subsubsection{Changepoint Localization} \textbf{Effect of the weighting parameters.}
We first examine how the quantile hyperparameter $\beta$ and the soft-weighting temperature hyperparameter $\lambda$ affect efficiency. 
Fig.~\ref{fig:uncertainty_results}(a) shows the confidence set size obtained by hard uncertainty thresholding for different values of $\beta$ on DomainNet and CIFAR-100. The results support the heuristic that, when $\varepsilon$ is known, hyperparameter $\beta$ should be matched to $\varepsilon$.

Setting $\beta=\varepsilon$, Fig.~\ref{fig:uncertainty_results}(b) compares hard weighting with soft weighting for different hyperparameter $\lambda$. 
Based on these results, we use $\lambda=0.05$ for DomainNet and $\lambda=0.01$ for CIFAR-100 in the W-CONCH comparisons below.\\
\textbf{Comparison against contamination levels.}
We compare CONCH, oracle-based W-CONCH, and the proposed W-CONCH variants across contamination levels. 
For hard and soft W-CONCH, we use the matched choice $\beta=\varepsilon$.
Following the meta-training design discussed in Sec.~\ref{subsec:meta_uncertainty_weighting}, we evaluate MW-CONCH under both fixed-$\varepsilon_{\mathrm{tr}}$ meta-learning, tested with $\varepsilon_{\mathrm{tr}}=\varepsilon$, and mixed-$\varepsilon_{\mathrm{tr}}$ meta-learning. We consider the EDL and MC dropout uncertainty models, denoting the resulting variants MW-CONCH (EDL-F/MC-F) and MW-CONCH (EDL-M/MC-M), respectively.

As shown in Fig.~\ref{fig:overall_results}(a) and (b), CONCH becomes increasingly inefficient as contamination grows, with the confidence set size on DomainNet rising from $2.07$ at $\varepsilon=0$ to $137.43$ at $\varepsilon=0.7$, and similarly on CIFAR-100 from $1.00$ to $345.27$.
Uncertainty-based weighting reduces this inflation. At $\varepsilon=0.7$, hard and soft W-CONCH bring the set size down to $18.77$ and $22.84$ on DomainNet, and to $7.24$ and $9.18$ on CIFAR-100.

Meta-learning improves efficiency further.
On DomainNet, at $\varepsilon=0.7$, MW-CONCH (EDL-F) reaches $3.61$, even outperforming oracle-based W-CONCH, which reaches $9.81$, suggesting that meta-learning not only improves the uncertainty estimates used for weighting, but also learns classifier representations that are more robust to contamination. MW-CONCH (EDL-M) produces a slightly larger confidence set at $\varepsilon=0.7$, reaching $6.76$, but still substantially improves over direct weighting, while not requiring knowledge of the test contamination level.
On CIFAR-100, at $\varepsilon=0.7$, MW-CONCH (EDL-F) reaches $5.59$, close to the oracle-based W-CONCH at $5.21$. MW-CONCH (EDL-M) reaches $5.64$, again substantially improving over W-CONCH.

Fig.~\ref{fig:overall_results}(c) reports the corresponding results on the Milan telecom dataset, where the same qualitative trends hold.
CONCH inflates from $4.18$ at $\varepsilon=0$ to $56.37$ at $\varepsilon=0.7$, while hard and soft W-CONCH substantially reduce this inflation, reaching $36.54$ and $30.28$ at $\varepsilon=0.7$, respectively.
MW-CONCH (EDL-F) performs similarly to soft weighting, reaching $30.14$ at $\varepsilon=0.7$, while MW-CONCH (EDL-M) achieves a comparable reduction, reaching $36.91$ at $\varepsilon=0.7$.
As a trade-off, MW-CONCH (EDL-M) is less efficient at low contamination, reaching $8.14$ at $\varepsilon=0$ compared to $4.18$ for CONCH.
Oracle-based W-CONCH remains the most efficient throughout, reaching $26.62$ at $\varepsilon=0.7$.

The MC dropout variants also reduce the set size at high contamination, showing that the framework is not tied to EDL, although they are less effective than their EDL counterparts.
This suggests that the efficiency gains depend partly on the quality of the uncertainty estimates used for weighting.
Overall, uncertainty-based weighting substantially improves the efficiency of CONCH under contamination, while meta-learning further improves both the uncertainty estimates used for weighting and the robustness of the learned classifier under contamination.

\subsubsection{Root-Cause Localization}
Figure~\ref{fig:wcroc_results}(a) reports the average penalized root-cause confidence set size $|\mathcal K_{\alpha}(\bX)|_{\mathrm{pen}}$ across contamination levels.
As contamination increases, the set size produced by CROC grows substantially, reaching $4.835$ at $\varepsilon=0.7$, indicating a loss of root-cause specificity.
Uncertainty-based weighting reduces this inflation at high contamination: at $\varepsilon=0.7$, W-CROC with hard and soft weighting reduces the penalized set size to $1.840$ and $2.100$, respectively.
Oracle-based W-CROC further reduces the penalized set size to $1.630$, providing an upper performance bound.
MW-CROC (EDL-M) achieves a similar reduction, with penalized set size $1.915$, while not requiring knowledge of the test contamination level.
MW-CROC (EDL-F) also improves over CROC, but gives a larger set size of $2.860$ at $\varepsilon=0.7$.
The MC dropout variants reduce the set size in some regimes but are generally less effective than their EDL counterparts at high contamination, with MW-CROC (MC-F) and MW-CROC (MC-M) reaching $4.490$ and $3.510$, respectively. This suggests that the efficiency gains depend partly on the quality of the uncertainty estimates used for weighting.

Figure~\ref{fig:wcroc_results}(b) evaluates the prior-informed CROC described in Sec.~\ref{sec:wcroc_prior} at $\varepsilon=0.7$ with $\alpha_{\max}=0.15$. As the prior weight $v$ increases from $1$ and $k$ decreases, the prior grows more informative and, as a result, the penalized set size decreases. 
With $k=1$, the set size drops from $4.885$ at $v=1$ to $3.610$ at $v=5$. Larger $k$ produces more moderate reductions, reaching $4.150$ at $v=2.5$ for $k=2$ and $4.560$ at $v\approx1.67$ for $k=3$, because fewer unlikely streams remain to absorb the redistributed significance budget.

\section{Conclusions}\label{sec:conclusions}

We studied conformal changepoint localization and root cause analysis under contaminated observations. We showed that split exchangeability, and therefore the finite-sample distribution-free coverage guarantees of CONCH~\cite{hore2026conformal} and CROC~\cite{hore2026distribution}, are preserved under a Huber-type contamination model, while contamination can substantially inflate confidence set size.
To address this, we proposed W-CONCH and W-CROC, which downweight observations that are likely to be corrupted without affecting conformal calibration, thereby preserving coverage while reducing confidence set size. We introduced a meta-learning approach that learns the weighting rule without requiring knowledge of the contamination level at test time. Experiments on image-based and real-world changepoint and root-cause benchmarks demonstrated substantial reductions in confidence set size while maintaining the target coverage.

Several research directions remain open. The framework connects confidence sets to downstream, potentially risk-averse, decision making, and coupling the size objective directly to a decision loss is a natural next step. Extending the approach to streaming and online settings, to multiple or unknown numbers of changepoints, and to richer inter-stream dependency structures in root cause analysis would broaden its applicability. Finally, characterizing how the informativeness of the confidence set degrades as a function of the contamination level, and deriving guarantees on set size under structured contamination, would complement the coverage guarantees established here \cite{zecchin2024generalization}.

\appendices

\section{Proofs and Derivations}
\label{app:proofs}

\subsection{Proof of Lemma~\ref{lem:coverage-exchangeability}}
\label{app:proof-exchangeability-contamination}
We prove the claim for the pre-change segment. 
The post-change case follows identically with indices shifted to $\{\xi+1,\ldots,n\}$.
Fix a permutation $\pi_{0,\xi}$ of $\{1,\ldots,\xi\}$.
Since $(Y_1,Z_1),\ldots,(Y_\xi,Z_\xi)$ are i.i.d.\ and independent of the
clean segment, Assumption~\ref{asm:split-exch} implies that the augmented
sequence $\big\{(\widetilde X_i,Y_i,Z_i)\big\}_{i\le\xi}$ is exchangeable.
Applying the deterministic contamination map
$\phi(\widetilde x,y,z)=(1-y)\widetilde x+yz$ componentwise, which satisfies
$X_i=\phi(\widetilde X_i,Y_i,Z_i)$, preserves this distributional equality $(X_1,\ldots,X_\xi)
\overset{d}{=}
(X_{\pi_{0,\xi}(1)},\ldots,X_{\pi_{0,\xi}(\xi)}).$
Since $\pi_{0,\xi}$ was arbitrary, $(X_1,\ldots,X_\xi)$ is exchangeable. 
The same argument for $\{\xi+1,\ldots,n\}$ establishes split exchangeability of $\mathbf X$ at $\xi$.

\subsection{Proof of Proposition~\ref{prop:elbo_bound}}
\label{app:elbo_bound}
We lower-bound the mixture marginal $g_j(x)$ in~\eqref{eq:contam_marginal} by introducing the latent contamination indicator $Y\in\{0,1\}$ and applying the ELBO to $\log g_j(x)$.
Under the contamination model, $Y$ has prior $p(Y=0)=1-\varepsilon$ and $p(Y=1)=\varepsilon$, with conditional densities $g_j(x| Y=0)=f_j(x)$ and $g_j(x|Y=1)=q(x)$, so that $g_j(x)=\sum_{y}g_j(x|y)p(y)$ as in~\eqref{eq:contam_marginal}.

For any auxiliary posterior $r_j(y\mid x)$ on $\{0,1\}$, the ELBO identity~\cite{simeone2022machine} gives
\begin{align}\label{eq:elbo_drop}
    \log g_j(x)
    &= \mathbb E_{r_j(y\mid x)}\big[\log g_j(x\mid y)\big] - \operatorname{KL}\big(r_j(y\mid x)\|p(y)\big) + \operatorname{KL}\big(r_j(y\mid x)\|g_j(y\mid x)\big) \nonumber\\
    &\ge \mathbb E_{r_j(y\mid x)}\big[\log g_j(x\mid y)\big] - \operatorname{KL}\big(r_j(y\mid x)\|p(y)\big),
\end{align}
where the inequality drops the nonnegative divergence $\operatorname{KL}\big(r_j(y\mid x)\|g_j(y\mid x)\big)$.

Set the clean-posterior weight $w_j(x):=r_j(Y=0|x)$, so that $r_j(Y=1| x)=1-w_j(x)$. Since $Y\in\{0,1\}$, the first term of~\eqref{eq:elbo_drop} satisfies
\begin{align}\label{eq:first_term}
    \mathbb E_{r_j(y\mid x)}\big[\log g_j(x\mid y)\big] &= w_j(x)\log f_j(x) + \big(1-w_j(x)\big)\log q(x) \nonumber\\
    &\ge w_j(x)\log f_j(x) - C_q,
\end{align}
using $g_j(x\mid Y{=}0)=f_j(x)$, $g_j(x\mid Y{=}1)=q(x)$, and $\log q(x)\ge -C_q$ with $1-w_j(x)\in[0,1]$. For the second term, since $r_j(y\mid x)\le 1$ and $p(y)\ge\min\{\varepsilon,1-\varepsilon\}$ for $\varepsilon\in(0,1)$,
\begin{align}
    \operatorname{KL}\big(r_j(y\mid x)\|p(y)\big)&=\mathbb E_{r_j(y\mid x)}\Big[\log\tfrac{r_j(y\mid x)}{p(y)}\Big]\nonumber \\
    &\le \log\tfrac{1}{\min\{\varepsilon,1-\varepsilon\}}.
\end{align}\label{eq:kl_bound}
Substituting~\eqref{eq:first_term} and~\eqref{eq:kl_bound} into~\eqref{eq:elbo_drop} gives $\log g_j(x)\ge w_j(x)\log f_j(x)-C_R$ with $C_R=C_q+\log(1/\min\{\varepsilon,1-\varepsilon\})$, which is independent of $j$. Setting $C:=e^{-C_R}$, which depends only on $C_q$ and $\varepsilon$, and exponentiating yields~\eqref{eq:elbo_bound}.

\subsection{Derivation of the Classifier-Based Weighted CPP Score}
\label{app:classifier-cusum-derivation}
Using the classifier log-posterior ratio $\Delta_i$ in~\eqref{eq:classifier_log_posterior_contrast}, we substitute this approximation into the weighted CPP score~\eqref{eq:weighted_cpp_score}. 
We use the candidate-dependent weight $w^{(t)}$, the weighted log-likelihood $N_t(s)$, and the weighted maximum-likelihood estimate $\hat\xi_t(\bX)=\arg\max_s N_t(s)$ from Definition~\ref{def:clf_weighted_cpp}, so that the numerator and denominator of~\eqref{eq:weighted_cpp_score} are $N_t(t)$ and $N_t(\hat\xi_t(\bX))$, respectively.

Applying the identity $\log\hat p(j=0\mid X_i)=\Delta_i+\log\hat p(j=1\mid X_i)$ to the pre-change sum in the definition of $N_t(s)$ gives, for any cutoff $s$,
\begin{equation}\label{eq:cusum_num}
N_t(s)=\sum_{i\le s} w^{(t)}(X_i)\Delta_i + C_1,
\end{equation}
where the remainder $C_1=\sum_{i} w^{(t)}(X_i)\log\hat p(j=1\mid X_i)$ does not depend on $s$, and $w^{(t)}(X_i)=w_0(X_i)$ for $i\le t$ and $w_1(X_i)$ for $i>t$.
Evaluating~\eqref{eq:cusum_num} at $s=t$ for the numerator and at $s=\hat\xi_t(\bX)$ for the denominator, and subtracting, cancels the common remainder $C_1$ and recovers the classifier-based weighted CPP score in Definition~\ref{def:clf_weighted_cpp}.

\subsection{Proof of Proposition~\ref{prop:weighted-coverage}}
\label{app:proof-weighted-coverage}
Fix $t\in\{1,\dots,n-1\}$. Under the null $H_0:\xi=t$, Lemma~\ref{lem:coverage-exchangeability} gives $\pi(\bX)\overset{d}{=}\bX$ for all $\pi\in\Pi_t$. We show that $p_t^w$ in~\eqref{eq:wconch_confidence_set} is super-uniform. 
Taking $t=\xi$, where $H_0$ holds, then gives $\Pr(\xi\notin\mathcal C_\alpha^w)=\Pr(p_\xi^w\le\alpha)\le\alpha$, which is~\eqref{eq:weight_coverage}.

The weights depend on $\bX$ only through the thresholds $\kappa_0(t,\beta),\kappa_1(t,\beta)$ in~\eqref{eq:side_quantiles}, which are empirical quantiles of $\{M_i\}_{i\le t}$ and $\{M_i\}_{i>t}$. A permutation $\pi\in\Pi_t$ acts only within each segment and leaves these multisets unchanged, so the thresholds are invariant under $\pi$. Since $M_i=h(X_i)$ depends only on $X_i$, the weights are permuted consistently with the observations:
\begin{equation}\label{eq:perm_equiv_weights}
    w_j(\pi(\bX))=\pi\bigl(w_j(\bX)\bigr),\qquad \forall \pi\in\Pi_t,\quad j=0,1.
\end{equation}

Thus the weighted CPP score $S_t^w$ is permutation-equivariant. Averaging $\Ind \{p_t^w(\pi(\bX)) \le \alpha\}$ over $\pi\in\Pi_t$ using $\pi(\bX)\overset{d}{=}\bX$, expanding the definition of $p_t^w$ in~\eqref{eq:wconch_confidence_set}, and relabeling $\pi'\circ\pi$ as $\pi'$ via the group property of $\Pi_t$ gives
\begin{equation}\label{eq:cov_final}
\Pr\bigl(p_t^w(\bX)\le\alpha\bigr)
=\mathbb E\Biggl[\frac{1}{|\Pi_t|}\sum_{\pi\in\Pi_t}\Ind\biggl\{\frac{1}{|\Pi_t|}
\sum_{\pi'\in\Pi_t}\Ind\bigl[S_t^w(\pi'(\bX))
\le S_t^w(\pi(\bX))\bigr]\le\alpha\biggr\}\Biggr] \le\alpha,
\end{equation}
where the inner average is the rank of $S_t^w(\pi(\bX))$ among $\{S_t^w(\pi'(\bX))\}_{\pi'\in\Pi_t}$, and the final inequality is the finite-sample rank bound of~\cite[Thm.~3.1]{hore2026conformal}. Hence $p_t^w$ is super-uniform under $H_0$.

\subsection{Proof of Proposition~\ref{prop:prior-coverage}}
\label{app:proof-prior-coverage}
The argument in Appendix~\ref{app:proof-weighted-coverage} shows that, under the null $H_0:\xi=t$, the $p$-value $p_t^w(\bX)$ is super-uniform, i.e., $\Pr\bigl(p_t^w(\bX)\le c\bigr)\le c$ for any constant $c\in[0,1]$, since the rank bound~\eqref{eq:cov_final} does not depend on the specific value of the threshold.
To establish the bound $\Pr\bigl(\xi\in\mathcal C_{\alpha,v}^{w}(\bX)\bigr)\ge 1-\alpha_{\max}$, the changepoint $\xi$ is excluded from the set~\eqref{eq:prior_confidence_set} only if $p_\xi^w(\bX)\le\alpha_\xi$. Applying the super-uniformity at $t=\xi$ with $c=\alpha_\xi$, and using $\alpha_\xi\le\alpha_{\max}$ from~\eqref{eq:prior_level}, gives
\begin{equation}
    \Pr\bigl(\xi\notin\mathcal C_{\alpha,v}^{w}(\bX)\bigr)
    =\Pr\bigl(p_\xi^w(\bX)\le\alpha_\xi\bigr)
    \le\alpha_\xi
    \le\alpha_{\max}.
\end{equation}
For the prior-averaged guarantee~\eqref{eq:prior_avg_coverage}, averaging over the prior $\Pr(\xi=t)=\frac{v_t}{n-1}$ with the bound $\alpha_t\le\frac{\alpha}{v_t}$ from~\eqref{eq:prior_level} gives
\begin{equation}
    \Pr\bigl(\xi\notin\mathcal C_{\alpha,v}^{w}(\bX)\bigr)
    =\sum_{t=1}^{n-1}\frac{v_t\cdot\Pr\bigl(p_t^w(\bX)\le\alpha_t\mid\xi=t\bigr)}{n-1} \le \alpha
\end{equation}

\begin{algorithm}[t]
\caption{MW-CONCH and MW-CROC}
\label{alg:meta_learning}
\begin{algorithmic}[1]
\Require 
Training contamination levels $\{\varepsilon_{\mathrm{tr}}^{(\ell)}\}_{\ell=1}^{L}$, with $L=1$ for fixed-$\varepsilon_{\mathrm{tr}}$ training and $L>1$ for mixed-$\varepsilon_{\mathrm{tr}}$ training; training tasks $\{\bX^{(k,\ell)}\}_{k=1}^{K}$ at each $\varepsilon_{\mathrm{tr}}^{(\ell)}$, where each task is a single sequence for MW-CONCH and a $D$-stream tuple $\bX^{(k,\ell)}=(\bX_1^{(k,\ell)},\ldots,\bX_D^{(k,\ell)})$ for MW-CROC; learning rate $\eta$; quantile parameter $\beta$; smoothing parameters $\tau_q,\tau_1,\tau_2$; soft-weight temperature $\lambda$
\State Initialize $\theta$
\For{each gradient step}
  \For{each level $\varepsilon_{\mathrm{tr}}^{(\ell)}$ and each single-stream sequence $Z$ in its training tasks (i.e., $Z=\bX^{(k,\ell)}$ for MW-CONCH, or $Z=\bX_d^{(k,\ell)}$, $d=1,\dots,D$, for MW-CROC)}
    \State Compute uncertainty scores $M_i = h(Z_i;\theta)$ for $i=1,\dots,n$
    \For{Candidate changepoints $t=1,\dots,n-1$}
      \State Compute $\hat\kappa_0(t,\beta)$ and $\hat\kappa_1(t,\beta)$ using~\eqref{eq:soft-quantile}
      \State Compute $\hat w_{0,i}(t;\theta)$ and $\hat w_{1,i}(t;\theta)$ using~\eqref{eq:smooth-weight}
      \State Compute $\tilde p_t(Z;\theta)$ using~\eqref{eq:smooth-pvalue}
    \EndFor
    \State Compute $\tilde\ell(Z;\theta)$ from $\{\tilde p_t(Z;\theta)\}_{t=1}^{n-1}$ using~\eqref{eq:smooth-loss-pertask}
  \EndFor
  \State For each $\varepsilon_{\mathrm{tr}}^{(\ell)}$, compute $\tilde{\mathcal L}(\theta;\varepsilon_{\mathrm{tr}}^{(\ell)})$ by averaging $\tilde\ell(Z;\theta)$ over all sequences at that level ($\frac{1}{K}\sum_{k}$ for MW-CONCH, $\frac{1}{KD}\sum_{d}\sum_{k}$ for MW-CROC)
  \State Update $\theta \leftarrow \theta - \eta \nabla_\theta \left[\frac{1}{L}\sum_{\ell=1}^{L} \tilde{\mathcal L}(\theta;\varepsilon_{\mathrm{tr}}^{(\ell)})\right]$
\EndFor
\State \Return learned uncertainty estimator $h(\cdot;\theta^*)$
\end{algorithmic}
\end{algorithm}
\subsection{Differentiable Training Loss and Meta-learning Algorithm for MW-CONCH and MW-CROC}
\label{app:algorithm}
The meta-training loss~\eqref{eq:meta-loss} is not differentiable because of three operations: the empirical quantiles in the thresholds~\eqref{eq:side_quantiles}, the indicator in the split-permutation $p$-value, and the counting operation used to compute the confidence set size~\eqref{eq:wconch_confidence_set}. 
We replace each by a smooth surrogate controlled by a temperature parameter, following~\cite{park2023few,stutz2021learning}.

First, we replace the empirical quantile in~\eqref{eq:side_quantiles} by a differentiable counterpart~\cite{park2023few}. For any finite set $\{a_r\}_{r=1}^m$ and quantile level $1-\beta$, let
\begin{equation}\label{eq:pinball}
\rho_{1-\beta}(a;\{a_r\}_{r=1}^{m+1})
=\sum_{r=1}^{m+1} \beta(a-a_r)^++(1-\beta)(a_r-a)^+,
\end{equation}
denote the pinball loss, where $(x)^+=\max(x,0)$ and $a_{m+1}=\max(\{a_r\}_{r=1}^m)+\delta$ with $\delta>0$ is an auxiliary point ensuring the soft quantile is well-defined at the boundary. The empirical quantile is then approximated by
\begin{equation}\label{eq:soft-quantile}
\hat Q_{1-\beta}(\{a_r\}_{r=1}^m)
=
\sum_{r=1}^{m+1}
a_r\,
\mathrm{softmax}\bigl(\frac{-\rho_{1-\beta}(a_r)}{\tau_q}\bigr),
\end{equation}
where $\mathrm{softmax}(x_r) = {\exp(x_r)}/{\sum_{\ell}\exp(x_\ell)}$ and $\tau_q>0$ is a smoothing temperature. Replacing the empirical quantiles in~\eqref{eq:side_quantiles} by $\hat Q_{1-\beta}$ yields smoothed thresholds $\hat \kappa_j(t,\beta)$, with which we define the differentiable training-time weight
\begin{equation}\label{eq:smooth-weight}
    \hat w_j(X_i;\theta)
    =
    \sigma\left(\frac{\hat \kappa_j(t,\beta)-h(X_i;\theta)}{\lambda}\right), j\in\{0,1\},
\end{equation}
in direct analogy with the soft weighting rule~\eqref{eq:soft_unc_weight}.

Second, we replace the indicator in the split-permutation $p$-value by a sigmoid. 
Writing $S_t^{\hat w}$ for the weighted CPP score evaluated with the training-time weights $\{\hat w_j(\cdot;\theta)\}_{j=0}^1$ in~\eqref{eq:smooth-weight}:
\begin{equation}\label{eq:smooth-pvalue}
\tilde p_t(\theta)
=
\frac{1}{|\Pi_t|}
\sum_{\pi\in\Pi_t}
\sigma\biggl(
\frac{1}{\tau_1}
\Bigl(
S_t^{\hat w}(\bX)
-
S_t^{\hat w}(\pi(\bX))
\Bigr)
\biggr),
\end{equation}
$\tau_1>0$ controls the sharpness of the sigmoid.

Third, we replace the threshold indicator in the set size by a sigmoid~\cite{stutz2021learning}, yielding the per-task smoothed size
\begin{equation}\label{eq:smooth-loss-pertask}
    \tilde\ell(\bX;\theta)
    =
    \sum_{t=1}^{n-1}
    \sigma\left(
    \frac{\tilde p_t(\theta)-\alpha}{\tau_2}
    \right),
\end{equation}
where $\tau_2>0$ controls the sharpness of the threshold.

Substituting $\tilde\ell$ for $|\mathcal C_{\alpha,w(\theta)}|$ in~\eqref{eq:meta-loss} gives the differentiable training loss $\tilde{\mathcal L}(\theta;\varepsilon)=\frac{1}{K}\sum_{k=1}^K\tilde\ell\bigl(\bX^{(k,\varepsilon)};\theta\bigr)$.
As the temperatures $\tau_q,\tau_1,\tau_2\to0$ are annealed during training, each surrogate recovers its non-differentiable counterpart and $\tilde{\mathcal L}(\theta;\varepsilon)\to\mathcal L(\theta;\varepsilon)$. The full procedure is given in Algorithm~\ref{alg:meta_learning}.

\section{Experimental Details and Results}
\label{app:experiments}

\subsection{Implementation Details}
\label{app:implementation_details}
\textbf{Training details and experimental settings.}
For DomainNet~\cite{peng2019moment} and CIFAR-100~\cite{krizhevsky2009learning}, the uncertainty model uses a ResNet-18~\cite{he2016deep} backbone. The same uncertainty model is used for both MW-CONCH and MW-CROC. During meta-learning fine-tuning, the backbone is frozen and only the selection head is updated. Unless otherwise specified, models are trained for $100$ epochs.\\
\textbf{MW-CONCH / MW-CROC (EDL-F).}
A separate model is trained for each fixed training contamination level
$\varepsilon_{\mathrm{tr}}\in\{0.1,0.3,0.5,0.7\}$, with the
soft-quantile threshold parameter set to $\beta=\varepsilon_{\mathrm{tr}}$.
At each epoch, $10$ tasks are sampled at contamination level
$\varepsilon_{\mathrm{tr}}$.\\
\textbf{MW-CONCH / MW-CROC (EDL-M).}
A single model is trained over a mixture of contamination levels, where
$\varepsilon_{\mathrm{tr}}$ is chosen from the grid
$\{0.0,0.1,\ldots,0.7\}$.
The threshold parameter is fixed at $\beta=0.3$ for all mixed-$\varepsilon_{\mathrm{tr}}$ experiments.
At each epoch, $5$ tasks are sampled per level, yielding $40$ tasks in total.\\
\textbf{Changepoint position randomization.}
To prevent the model from learning position-specific suppression patterns,
the changepoint position $\xi$ is sampled uniformly from a fixed range
for each training task rather than being held fixed. For DomainNet ($n=800$),$\xi\sim\mathrm{Unif}\{160,\dots,640\}.$
For CIFAR-100 ($n=400$), $\xi\sim\mathrm{Unif}\{120,\dots,280\}$ in the W-CONCH setting and $\xi\sim\mathrm{Unif}\{80,\dots,320\}$ in the W-CROC setting.

For both training regimes and all datasets, the smoothing parameters $\tau_1$ and $\tau_2$ are initialized at $0.5$ and annealed with decay factor $0.95$ per epoch. The soft-quantile temperature is fixed at $\tau_q=1.0$, and the same soft-quantile operator is used to compute the side-specific thresholds of the soft weighting rule at evaluation time.
The sigmoid temperature $\lambda$ for the soft weights is set to $\lambda=0.05$ for all DomainNet experiments and $\lambda=0.01$ for CIFAR-100 W-CONCH experiments. For CIFAR-100 W-CROC experiments, we use $\lambda=0.01$ for soft weighting and EDL-based uncertainty, and use $\lambda=0.005$ only for the MW-CROC variants based on MC Dropout.

For DomainNet, the smoothing temperatures are annealed to a minimum of $\tau_{\min}=0.01$, using $50$ split permutations per training task.
For CIFAR-100, $\tau_{\min}=0.05$ is used to avoid overly sharp sigmoid approximations at the end of the annealing schedule, with $50$ split permutations per training task. The same training configuration applies to both MW-CONCH and MW-CROC.\\
\textbf{Milan telecom: data construction.}
Since the dataset provides no cell-type annotations, evaluation cells are selected from their traffic profiles.
For each cell, we compute two ratios over the pre-holiday period (November~4--December~20): the mean daily internet traffic on weekends divided by that on weekdays, and the mean daily traffic during the Christmas week (December~23--27) divided by the pre-holiday mean.
We retain cells whose weekend-to-weekday ratio exceeds $0.8$, excluding office-type cells whose strong weekly pattern would confound the holiday change, and whose Christmas-to-normal ratio exceeds $0.7$, excluding cells whose activity collapses over the holidays and therefore lacks a well-defined localized changepoint; cells with low overall traffic or fewer than $50$ observed days are also discarded.
The ground-truth changepoint is detected as the largest activity shift within December~20--26, keeping only cells whose changepoint falls on December~24 ($\xi=54$).
Accordingly, training cells are labeled pre-change ($j=0$) before December~24 and post-change ($j=1$) thereafter.\\
\textbf{Milan telecom: models and uncertainty.}
All models use a residual MLP backbone ($720\to256$, three residual blocks), with standardization statistics computed on the training-cell pool.
Since $\mathrm{relu}(\text{logit})$ evidence is overconfident on noisy tabular inputs, the EDL model computes evidence with a radial basis function head~\cite{van2020uncertainty}, $e_k=\gamma\exp(-\|\phi(x)-c_k\|^2/(2\sigma^2))$, which decays with the distance of the representation $\phi(x)$ from learnable class centroids $c_k$ (with scales $\gamma,\sigma$ learned in log space and initialized to $\gamma=e^{2}$, $\sigma=1$); the score $\Delta_i=\log e_0-\log e_1$ and the uncertainty $M_i=2/(e_0+e_1+2)$ are both produced by this model, trained on class-balanced data with an evidence-magnitude penalty of $0.01$.
For MC Dropout, $M_i$ is the standard deviation of $\Delta_i$ across $T=50$ stochastic passes, as predictive entropy saturates under large tabular noise; unlike the EDL model, its underlying classifier is trained on all available days rather than class-balanced data.\\
\textbf{Milan telecom: meta-learning.}
The meta-learning procedure follows the settings described above, except that the full model is fine-tuned for $30$ epochs with learning rate $10^{-4}$, using $50$ tasks per contamination level and $100$ split permutations per task, with $\tau_1$ and $\tau_2$ initialized at $1.0$; the sigmoid temperature is $\lambda=0.05$, and the mixed-$\varepsilon_{\mathrm{tr}}$ models use $\beta=0.1$ for EDL and $\beta=0.3$ for MC Dropout.

{
\subsection{Detailed Experiment Results}
\label{app:experiment_details}

\subsubsection{Detailed Changepoint Localization Results}
\label{app:changepoint_details}
We provide detailed numerical results corresponding to the experiments presented in the main text.
Each entry reports the average confidence set size $|\mathcal{C}_\alpha(\bX)|$, with empirical coverage shown in parentheses.
\begin{table}[H]
\centering
\caption{
Detailed numerical results for DomainNet, CIFAR-100, and the Milan telecom dataset.
Each entry reports the average confidence set size $|\mathcal{C}_\alpha(\bX)|$ (coverage).
Hard, Soft, and EDL-F (as well as MC-F) use $\beta=\varepsilon$ (matched to the test contamination level).
For $\varepsilon=0$, the corresponding CONCH result is reported since $\beta=0$ is undefined.
}
\label{tab:overall_results}
\small
\begin{tabular}{lccccc}
\toprule
Method
& $\varepsilon=0.0$
& $0.1$
& $0.3$
& $0.5$
& $0.7$ \\
\midrule
\multicolumn{6}{l}{\textbf{DomainNet}} \\
\midrule
CONCH
& 2.07 (.940)
& 2.33 (.960)
& 4.03 (.945)
& 11.19 (.955)
& 137.43 (.965) \\
Oracle W-CONCH
& 2.07 (.940)
& 2.37 (.955)
& 3.44 (.965)
& 4.96 (.960)
& 9.81 (.950) \\
W-CONCH (Hard)
& 1.51 (.955)
& 1.88 (.955)
& 3.40 (.935)
& 6.92 (.965)
& 18.77 (.945) \\
W-CONCH (Soft)
& 1.51 (.955)
& 1.86 (.955)
& 3.29 (.930)
& 7.24 (.970)
& 22.84 (.945) \\
MW-CONCH (EDL-F)
& 1.51 (.955)
& 1.75 (.945)
& 2.44 (.960)
& 3.15 (.970)
& 3.61 (.970) \\
MW-CONCH (EDL-M)
& 1.85 (.945)
& 1.88 (.940)
& 2.38 (.940)
& 3.56 (.975)
& 6.76 (.970) \\
MW-CONCH (MC-F)
& 1.51 (.955)
& 3.88 (.955)
& 3.42 (.965)
& 5.12 (.950)
& 7.96 (.950) \\
MW-CONCH (MC-M)
& 2.42 (.940)
& 2.87 (.955)
& 3.47 (.925)
& 4.14 (.965)
& 4.82 (.980) \\
\midrule
\multicolumn{6}{l}{\textbf{CIFAR-100}} \\
\midrule
CONCH
& 1.00 (.985)
& 1.90 (.970)
& 8.87 (.935)
& 113.14 (.955)
& 345.27 (.975) \\
Oracle W-CONCH
& 1.00 (.985)
& 1.16 (.990)
& 1.89 (.990)
& 2.79 (.990)
& 5.21 (.965) \\
W-CONCH (Hard)
& 1.00 (.985)
& 1.42 (.970)
& 2.23 (.945)
& 3.42 (.945)
& 7.24 (.965) \\
W-CONCH (Soft)
& 1.00 (.985)
& 1.35 (.960)
& 2.15 (.955)
& 3.56 (.935)
& 9.18 (.950) \\
MW-CONCH (EDL-F)
& 1.00 (.985)
& 1.52 (.955)
& 2.39 (.950)
& 3.47 (.965)
& 5.59 (.935) \\
MW-CONCH (EDL-M)
& 1.49 (.960)
& 1.54 (.940)
& 2.24 (.935)
& 3.20 (.975)
& 5.64 (.915) \\
MW-CONCH (MC-F)
& 1.00 (.985)
& 2.68 (.965)
& 3.48 (.965)
& 4.24 (.915)
& 6.54 (.950) \\
MW-CONCH (MC-M)
& 2.56 (.955)
& 2.88 (.980)
& 3.52 (.945)
& 4.29 (.940)
& 5.93 (.935) \\
\midrule
\multicolumn{6}{l}{\textbf{Milan telecom}} \\
\midrule
CONCH
& 4.18 (.940)
& 27.18 (.915)
& 47.44 (.905)
& 53.12 (.930)
& 56.37 (.935) \\
Oracle W-CONCH
& 4.18 (.940)
& 5.38 (.960)
& 8.16 (.970)
& 12.60 (.950)
& 26.62 (.963) \\
W-CONCH (Hard)
& 4.18 (.940)
& 9.30 (.910)
& 12.33 (.925)
& 19.86 (.960)
& 36.54 (.970) \\
W-CONCH (Soft)
& 4.18 (.940)
& 9.02 (.905)
& 11.01 (.915)
& 16.27 (.950)
& 30.28 (.935) \\
MW-CONCH (EDL-F)
& 4.18 (.940)
& 9.05 (.900)
& 10.93 (.905)
& 16.23 (.950)
& 30.14 (.935) \\
MW-CONCH (EDL-M)
& 8.14 (.895)
& 8.95 (.910)
& 11.55 (.905)
& 20.03 (.950)
& 36.91 (.950) \\
MW-CONCH (MC-F)
& 4.18 (.940)
& 14.77 (.945)
& 17.97 (.955)
& 20.87 (.965)
& 32.72 (.970) \\
MW-CONCH (MC-M)
& 4.27 (.930)
& 5.48 (.965)
& 16.18 (.960)
& 41.54 (.980)
& 54.03 (.920) \\
\bottomrule
\end{tabular}
\end{table}

\subsubsection{Detailed Root-Cause Localization Results}
\label{app:rootcause_details}

We provide detailed numerical results for the W-CROC root-cause localization experiments.
Each entry reports the average penalized confidence set size $|\mathcal K_{\alpha}(\bX)|_{\mathrm{pen}}$, empirical coverage, and the per-stream root-cause $p$-values $p_{(1)},\ldots,p_{(5)}$, where stream~1 is the true root cause.

\begin{table}[H]
\centering
\caption{
W-CROC results on CIFAR-100 (noise $\sigma{=}0.3$, $n{=}400$,
$\alpha{=}0.01$, $100$ split permutations, $200$ test tasks).
W-CROC Hard and Soft use $\beta{=}0.7$;
MW-CROC EDL-F and MC-F use $\beta{=}\varepsilon$;
MW-CROC EDL-M and MC-M use mixed-$\varepsilon_{\mathrm{tr}}$ training
with $\beta{=}0.3$.
}
\label{tab:wcroc_cifar100}
\footnotesize
\begin{tabular}{cl|cc|ccccc}
\toprule
$\varepsilon$ & Method
  & $|{\mathcal{K}}_{\alpha}(\bX)|_{\mathrm{pen}}$
  & Cov.
  & $p_{(1)}$ & $p_{(2)}$ & $p_{(3)}$ & $p_{(4)}$ & $p_{(5)}$ \\
\midrule
\multirow{8}{*}{$0$}
  & CROC             & 1.000 & 1.000 & 0.910 & 0.010 & 0.010 & 0.010 & 0.010 \\
  & Oracle W-CROC    & 1.000 & 1.000 & 0.910 & 0.010 & 0.010 & 0.010 & 0.010 \\
  & W-CROC (Hard)    & 1.285 & 1.000 & 0.985 & 0.011 & 0.020 & 0.010 & 0.048 \\
  & W-CROC (Soft)    & 1.030 & 0.995 & 0.831 & 0.010 & 0.010 & 0.010 & 0.010 \\
  & MW-CROC (EDL-F)  & 1.060 & 0.985 & 0.824 & 0.010 & 0.010 & 0.010 & 0.010 \\
  & MW-CROC (EDL-M)  & 1.080 & 0.980 & 0.552 & 0.010 & 0.010 & 0.010 & 0.010 \\
  & MW-CROC (MC-F)   & 1.060 & 0.985 & 0.824 & 0.010 & 0.010 & 0.010 & 0.010 \\
  & MW-CROC (MC-M)   & 1.080 & 0.980 & 0.687 & 0.010 & 0.010 & 0.010 & 0.010 \\
\midrule
\multirow{8}{*}{$0.1$}
  & CROC             & 1.105 & 0.975 & 0.633 & 0.010 & 0.010 & 0.010 & 0.010 \\
  & Oracle W-CROC    & 1.040 & 0.990 & 0.925 & 0.010 & 0.010 & 0.010 & 0.010 \\
  & W-CROC (Hard)    & 1.230 & 0.995 & 0.976 & 0.011 & 0.010 & 0.010 & 0.032 \\
  & W-CROC (Soft)    & 1.065 & 0.985 & 0.610 & 0.010 & 0.010 & 0.010 & 0.010 \\
  & MW-CROC (EDL-F)  & 1.040 & 0.990 & 0.595 & 0.010 & 0.010 & 0.010 & 0.010 \\
  & MW-CROC (EDL-M)  & 1.040 & 0.990 & 0.555 & 0.010 & 0.010 & 0.010 & 0.010 \\
  & MW-CROC (MC-F)   & 1.000 & 1.000 & 0.628 & 0.010 & 0.010 & 0.010 & 0.010 \\
  & MW-CROC (MC-M)   & 1.075 & 0.990 & 0.592 & 0.010 & 0.010 & 0.010 & 0.010 \\
\midrule
\multirow{8}{*}{$0.3$}
  & CROC             & 2.575 & 0.995 & 0.492 & 0.052 & 0.013 & 0.022 & 0.017 \\
  & Oracle W-CROC    & 1.000 & 1.000 & 0.942 & 0.010 & 0.010 & 0.010 & 0.010 \\
  & W-CROC (Hard)    & 1.245 & 1.000 & 0.976 & 0.016 & 0.016 & 0.010 & 0.032 \\
  & W-CROC (Soft)    & 1.065 & 0.990 & 0.508 & 0.010 & 0.010 & 0.010 & 0.010 \\
  & MW-CROC (EDL-F)  & 1.045 & 0.990 & 0.500 & 0.010 & 0.010 & 0.010 & 0.010 \\
  & MW-CROC (EDL-M)  & 1.065 & 0.990 & 0.521 & 0.010 & 0.010 & 0.010 & 0.010 \\
  & MW-CROC (MC-F)   & 1.210 & 0.980 & 0.543 & 0.012 & 0.010 & 0.011 & 0.011 \\
  & MW-CROC (MC-M)   & 1.230 & 0.985 & 0.515 & 0.011 & 0.010 & 0.011 & 0.011 \\
\midrule
\multirow{8}{*}{$0.5$}
  & CROC             & 4.895 & 0.985 & 0.537 & 0.409 & 0.313 & 0.339 & 0.324 \\
  & Oracle W-CROC    & 1.040 & 1.000 & 0.906 & 0.010 & 0.010 & 0.010 & 0.010 \\
  & W-CROC (Hard)    & 1.390 & 1.000 & 0.971 & 0.031 & 0.017 & 0.016 & 0.043 \\
  & W-CROC (Soft)    & 1.125 & 0.995 & 0.520 & 0.010 & 0.010 & 0.010 & 0.014 \\
  & MW-CROC (EDL-F)  & 1.285 & 0.995 & 0.518 & 0.014 & 0.010 & 0.011 & 0.021 \\
  & MW-CROC (EDL-M)  & 1.140 & 1.000 & 0.548 & 0.011 & 0.010 & 0.011 & 0.013 \\
  & MW-CROC (MC-F)   & 2.515 & 0.980 & 0.500 & 0.070 & 0.028 & 0.059 & 0.046 \\
  & MW-CROC (MC-M)   & 2.135 & 0.995 & 0.523 & 0.032 & 0.021 & 0.034 & 0.022 \\
\midrule
\multirow{8}{*}{$0.7$}
  & CROC             & 4.835 & 0.990 & 0.503 & 0.332 & 0.501 & 0.241 & 0.208 \\
  & Oracle W-CROC    & 1.630 & 0.990 & 0.897 & 0.048 & 0.027 & 0.038 & 0.048 \\
  & W-CROC (Hard)    & 1.840 & 0.995 & 0.589 & 0.043 & 0.033 & 0.039 & 0.052 \\
  & W-CROC (Soft)    & 2.100 & 0.995 & 0.515 & 0.048 & 0.023 & 0.031 & 0.047 \\
  & MW-CROC (EDL-F)  & 2.860 & 0.985 & 0.531 & 0.111 & 0.044 & 0.018 & 0.165 \\
  & MW-CROC (EDL-M)  & 1.915 & 1.000 & 0.479 & 0.030 & 0.013 & 0.017 & 0.047 \\
  & MW-CROC (MC-F)   & 4.490 & 0.995 & 0.511 & 0.280 & 0.176 & 0.214 & 0.279 \\
  & MW-CROC (MC-M)   & 3.510 & 0.995 & 0.497 & 0.110 & 0.037 & 0.073 & 0.058 \\
\bottomrule
\end{tabular}
\end{table}

\subsubsection{Prior-Informed CROC Results}
\label{app:prior_croc_details}

Table~\ref{tab:prior_croc} reports the prior-informed CROC results on CIFAR-100 at $\varepsilon=0.7$ with $\alpha_{\max}=0.15$. The prior assigns weight $v_d=v>1$ to each of the $k$ most likely streams and $v_d'=\frac{D-kv}{D-k}$ to all other streams. The per-stream significance levels are $\alpha_d = \min(\alpha / v_d,\, \alpha_{\max})$. The true root stream is always included among the $k$ likely streams, indicating a well-specified prior. As $v$ increases, the prior concentrates more weight on the likely streams, allowing the unlikely streams to be tested at higher levels and thus more easily excluded. With $k=1$, the penalized set size drops from $4.885$ at the uniform baseline ($v=1$) to $3.610$ at $v=D=5$. Larger likely sets produce more moderate reductions, as fewer unlikely streams remain to absorb the redistributed significance budget.

\begin{table}[H]
\centering
\caption{Prior-informed CROC on CIFAR-100 at $\varepsilon{=}0.7$
with $\alpha{=}0.01$ and $\alpha_{\max}{=}0.15$.
Each row reports the penalized set size $|\mathcal K_{\alpha}(\bX)|_{\mathrm{pen}}$
and empirical coverage.
The baseline row corresponds to standard CROC at $\alpha{=}0.01$ ($v{=}1$).}
\label{tab:prior_croc}
\begin{tabular}{cl|cc}
\toprule
$k$ & $v$
  & $|{\mathcal{K}}_{\alpha}|_{\mathrm{pen}}$
  & Cov. \\
\midrule
\multicolumn{2}{c|}{CROC ($v{=}1$)} & 4.885 & 0.985 \\
\midrule
\multirow{9}{*}{$1$}
  & 1.0 & 4.885 & 0.985 \\
  & 1.5 & 4.850 & 1.000 \\
  & 2.0 & 4.850 & 1.000 \\
  & 2.5 & 4.850 & 1.000 \\
  & 3.0 & 4.755 & 1.000 \\
  & 3.5 & 4.755 & 1.000 \\
  & 4.0 & 4.540 & 1.000 \\
  & 4.5 & 4.175 & 1.000 \\
  & 5.0 & 3.610 & 1.000 \\
\midrule
\multirow{7}{*}{$2$}
  & 1.00 & 4.885 & 0.985 \\
  & 1.25 & 4.925 & 1.000 \\
  & 1.50 & 4.925 & 1.000 \\
  & 1.75 & 4.880 & 1.000 \\
  & 2.00 & 4.820 & 1.000 \\
  & 2.25 & 4.650 & 1.000 \\
  & 2.50 & 4.150 & 1.000 \\
\midrule
\multirow{5}{*}{$3$}
  & 1.00 & 4.885 & 0.985 \\
  & 1.17 & 4.965 & 1.000 \\
  & 1.33 & 4.950 & 1.000 \\
  & 1.50 & 4.910 & 1.000 \\
  & 1.67 & 4.560 & 1.000 \\
\bottomrule
\end{tabular}
\end{table}

\subsection{MNIST Experiments}
\label{app:mnist}
We report MNIST~\cite{deng2012mnist} results for both W-CONCH and W-CROC, complementing the DomainNet and CIFAR-100 experiments presented.

\subsubsection{MNIST Changepoint Localization Results}
Table~\ref{tab:overall_results_mnist} summarizes the changepoint localization results on MNIST (digits $3{\to}5$, $n{=}400$, $\xi{=}250$, $\alpha{=}0.05$, horizontal-flip-and-blur contamination with $\sigma{=}1.5$). Each entry reports the average confidence set size $|\mathcal{C}_\alpha(\bX)|$, with empirical coverage shown in parentheses.

CONCH deteriorates sharply under strong contamination, increasing from $1.00$ at $\varepsilon{=}0$ to $89.62$ at $\varepsilon{=}0.7$. Direct uncertainty-based weighting via W-CONCH substantially reduces the confidence set size, reaching $6.88$ (Hard) and $6.93$ (Soft) at $\varepsilon{=}0.7$. Meta-learning yields further improvements. At $\varepsilon{=}0.7$, MW-CONCH (EDL-F) achieves a confidence set size of $4.60$, while MW-CONCH (EDL-M) reduces it further to $2.31$. The MC Dropout variants are similarly effective, with MW-CONCH (MC-F) and MW-CONCH (MC-M) achieving confidence set sizes of $2.46$ and $3.23$, respectively. Coverage remains close to the nominal level $1-\alpha=0.95$ across all contamination levels, indicating that the proposed weighting schemes preserve the conformal coverage guarantee while substantially improving efficiency.

\begin{table}[H]
\centering
\caption{
MNIST changepoint localization results ($n{=}400$, $\xi{=}250$,
$\alpha{=}0.05$, $400$ split permutations, $200$ test tasks).
Each entry reports $|\mathcal{C}_\alpha(\bX)|$ (coverage).
W-CONCH (Hard, Soft), and MW-CONCH (EDL-F) use $\beta{=}\varepsilon$;
$\varepsilon{=}0$ is filled with CONCH since $\beta{=}0$ is undefined.
MW-CONCH (EDL-M) and MW-CONCH (MC-M) are trained on mixed contamination levels with $\beta{=}0.3$.
}
\label{tab:overall_results_mnist}
\small
\begin{tabular}{lccccc}
\toprule
Method
& $\varepsilon=0.0$
& $0.1$
& $0.3$
& $0.5$
& $0.7$ \\
\midrule
CONCH
& 1.00 (1.000)
& 1.24 (.975)
& 2.89 (.950)
& 8.39 (.970)
& 89.62 (.955) \\
Oracle W-CONCH
& 1.00 (1.000)
& 1.16 (.995)
& 1.89 (.985)
& 2.81 (.990)
& 5.15 (.970) \\
W-CONCH (Hard)
& 1.00 (1.000)
& 1.21 (.980)
& 1.97 (.930)
& 3.27 (.975)
& 6.88 (.950) \\
W-CONCH (Soft)
& 1.00 (1.000)
& 1.18 (.980)
& 1.99 (.955)
& 3.08 (.935)
& 6.93 (.945) \\
MW-CONCH (EDL-F)
& 1.00 (1.000)
& 1.15 (.970)
& 1.90 (.940)
& 2.62 (.925)
& 4.60 (.930) \\
MW-CONCH (EDL-M)
& 1.02 (.965)
& 1.06 (.950)
& 1.42 (.920)
& 1.64 (.955)
& 2.31 (.955) \\
MW-CONCH (MC-F)
& 1.00 (1.000)
& 1.18 (1.000)
& 2.03 (.985)
& 2.98 (.955)
& 2.46 (.945) \\
MW-CONCH (MC-M)
& 1.03 (.995)
& 1.12 (.990)
& 1.53 (.975)
& 2.10 (.985)
& 3.23 (.950) \\
\bottomrule
\end{tabular}
\end{table}

\subsubsection{MNIST Root-Cause Localization Results}
We provide detailed numerical results for the W-CROC root-cause localization experiments on MNIST. Each entry reports the average confidence set size $|\mathcal{K}_{\alpha}(\bX)|$, empirical coverage, and the per-stream root-cause $p$-values $p_{(1)},\ldots,p_{(5)}$, where stream1 is the true root stream. A stream $k$ is included in the confidence set whenever $p_{(k)}>\alpha$.

The multi-stream MNIST benchmark consists of $D=5$ streams, corresponding to the binary digit shifts $(2{\to}5)$, $(1{\to}7)$, $(3{\to}8)$, $(0{\to}6)$, and $(4{\to}9)$. Stream1 is the root-cause stream with changepoint at $t_1=150$, while the remaining streams change at $t_d=152$ for $d=2,\ldots,5$. Each stream contains $n=400$ observations. Each observation is independently contaminated with probability $\varepsilon$ using a horizontal-flip-and-blur transformation with Gaussian blur standard deviation $\sigma=1.5$.

Table~\ref{tab:wcroc_mnist} reports the detailed results. Similar to the CIFAR-100 benchmark, the confidence set size produced by CROC increases substantially as the contamination level grows, reaching $4.885$ at $\varepsilon=0.7$. The Genie reference maintains confidence set sizes close to one across all contamination levels, indicating that root-cause localization remains feasible when contamination is properly handled. W-CROC (Hard) provides moderate improvements over CROC, while the meta-learned variants further improve localization performance. In particular, MW-CROC (EDL-M) achieves the smallest confidence set sizes at moderate contamination levels ($\varepsilon=0.3$ and $0.5$), whereas MW-CROC (EDL-F) remains more effective under severe contamination ($\varepsilon=0.7$). Across all methods, empirical coverage remains close to the nominal level $1-\alpha=0.99$.

\begin{table}[H]
\centering
\caption{
W-CROC results on MNIST (hflip+blur $\sigma{=}1.5$, $D{=}5$ streams, $n{=}400$, $\alpha{=}0.01$, $100$ split permutations, $200$ test tasks).
W-CROC (Hard) and MW-CROC (EDL-F) use $\beta{=}\varepsilon$;
$\varepsilon{=}0$ is filled with CROC.
MW-CROC (EDL-M) is trained on mixed contamination levels with $\beta{=}0.5$.
}
\label{tab:wcroc_mnist}
\footnotesize
\begin{tabular}{cl|cc|ccccc}
\toprule
$\varepsilon$ & Method & $|\mathcal{K}_{\alpha}(\boldsymbol{X})|_{\mathrm{pen}}$ & Cov.
  & $p_{(1)}$ & $p_{(2)}$ & $p_{(3)}$ & $p_{(4)}$ & $p_{(5)}$ \\
\midrule
\multirow{5}{*}{$0$}
  & CROC             & 1.000 & 1.000 & 0.990 & 0.010 & 0.010 & 0.010 & 0.010 \\
  & Oracle W-CROC    & 1.000 & 1.000 & 0.990 & 0.010 & 0.010 & 0.010 & 0.010 \\
  & W-CROC (Hard)    & 1.000 & 1.000 & 0.949 & 0.010 & 0.010 & 0.010 & 0.010 \\
  & MW-CROC (EDL-F)  & 1.000 & 1.000 & 0.949 & 0.010 & 0.010 & 0.010 & 0.010 \\
  & MW-CROC (EDL-M)  & 1.025 & 1.000 & 0.943 & 0.010 & 0.010 & 0.010 & 0.010 \\
\midrule
\multirow{5}{*}{$0.1$}
  & CROC             & 1.015 & 0.995 & 0.662 & 0.010 & 0.010 & 0.010 & 0.010 \\
  & Oracle W-CROC    & 1.015 & 1.000 & 0.995 & 0.010 & 0.010 & 0.010 & 0.010 \\
  & W-CROC (Hard)    & 1.480 & 1.000 & 0.866 & 0.010 & 0.010 & 0.034 & 0.041 \\
  & MW-CROC (EDL-F)  & 1.085 & 0.990 & 0.696 & 0.010 & 0.010 & 0.010 & 0.011 \\
  & MW-CROC (EDL-M)  & 1.040 & 1.000 & 0.820 & 0.010 & 0.010 & 0.010 & 0.010 \\
\midrule
\multirow{5}{*}{$0.3$}
  & CROC             & 1.675 & 0.995 & 0.519 & 0.017 & 0.030 & 0.015 & 0.014 \\
  & Oracle W-CROC    & 1.245 & 1.000 & 1.000 & 0.011 & 0.031 & 0.026 & 0.015 \\
  & W-CROC (Hard)    & 2.425 & 0.995 & 0.788 & 0.012 & 0.035 & 0.156 & 0.151 \\
  & MW-CROC (EDL-F)  & 1.630 & 0.990 & 0.520 & 0.016 & 0.023 & 0.020 & 0.019 \\
  & MW-CROC (EDL-M)  & 1.430 & 0.985 & 0.630 & 0.016 & 0.013 & 0.012 & 0.017 \\
\midrule
\multirow{5}{*}{$0.5$}
  & CROC             & 3.785 & 0.990 & 0.498 & 0.081 & 0.131 & 0.064 & 0.060 \\
  & Oracle W-CROC    & 1.865 & 1.000 & 0.995 & 0.072 & 0.081 & 0.091 & 0.086 \\
  & W-CROC (Hard)    & 3.545 & 0.990 & 0.653 & 0.062 & 0.124 & 0.324 & 0.323 \\
  & MW-CROC (EDL-F)  & 2.280 & 1.000 & 0.491 & 0.024 & 0.070 & 0.049 & 0.038 \\
  & MW-CROC (EDL-M)  & 1.985 & 0.995 & 0.532 & 0.047 & 0.056 & 0.023 & 0.030 \\
\midrule
\multirow{5}{*}{$0.7$}
  & CROC             & 4.975 & 1.000 & 0.526 & 0.352 & 0.412 & 0.327 & 0.314 \\
  & Oracle W-CROC    & 3.305 & 1.000 & 0.995 & 0.264 & 0.231 & 0.234 & 0.212 \\
  & W-CROC (Hard)    & 4.505 & 0.990 & 0.517 & 0.161 & 0.272 & 0.508 & 0.506 \\
  & MW-CROC (EDL-F)  & 3.170 & 0.995 & 0.474 & 0.116 & 0.121 & 0.084 & 0.077 \\
  & MW-CROC (EDL-M)  & 4.060 & 0.995 & 0.493 & 0.191 & 0.183 & 0.070 & 0.179 \\
\bottomrule
\end{tabular}
\end{table}

\subsection{Additional Coverage Results}

Table~\ref{tab:ablation_alpha} reports the effect of the miscoverage level $\alpha$ on MW-CONCH (EDL-M) across DomainNet, CIFAR-100, and MNIST. As expected, increasing $\alpha$ produces smaller confidence sets at the cost of lower empirical coverage. Across all datasets and contamination levels, the empirical coverage generally remains close to the nominal target level $1-\alpha$, supporting the finite-sample validity of the proposed conformal procedure. These results illustrate the expected trade-off between confidence set size and coverage controlled by the choice of $\alpha$.

\begin{table}[H]
\centering
\caption{
Effect of miscoverage level $\alpha$ on MW-CONCH (EDL-M).
Each cell reports the average confidence set size
$|\mathcal{C}_\alpha(\bX)|$
with empirical coverage in parentheses.
Theoretical guarantee: coverage $\ge 1-\alpha$.
}
\label{tab:ablation_alpha}
\small
\begin{tabular}{c|ccccc}
\toprule
 & \multicolumn{5}{c}{Contamination level $\varepsilon$} \\
\cmidrule{2-6}
$\alpha$ & $0.0$ & $0.1$ & $0.3$ & $0.5$ & $0.7$ \\
\midrule
\multicolumn{6}{l}{\textbf{DomainNet} } \\
\midrule
$0.05$ & 1.81\;(.955) & 1.95\;(.940) & 2.46\;(.960) & 3.57\;(.965) & 6.75\;(.965) \\
$0.10$ & 1.33\;(.900) & 1.41\;(.890) & 1.86\;(.880) & 2.75\;(.925) & 5.17\;(.930) \\
$0.15$ & 1.14\;(.855) & 1.21\;(.835) & 1.50\;(.800) & 2.32\;(.880) & 4.38\;(.905) \\
$0.20$ & 1.01\;(.830) & 1.05\;(.780) & 1.26\;(.755) & 1.91\;(.805) & 3.79\;(.865) \\
$0.25$ & 1.00\;(.830) & 1.00\;(.755) & 1.10\;(.735) & 1.65\;(.780) & 3.21\;(.805) \\
$0.30$ & 1.00\;(.830) & 1.00\;(.755) & 1.00\;(.710) & 1.32\;(.700) & 2.67\;(.745) \\
\midrule
\multicolumn{6}{l}{\textbf{CIFAR-100}} \\
\midrule
$0.05$ & 1.23\;(.945) & 1.39\;(.960) & 2.08\;(.935) & 3.22\;(.990) & 6.09\;(.925) \\
$0.10$ & 1.00\;(.905) & 1.14\;(.900) & 1.79\;(.910) & 2.67\;(.945) & 5.07\;(.895) \\
$0.15$ & 1.00\;(.905) & 1.03\;(.865) & 1.56\;(.855) & 2.28\;(.905) & 4.33\;(.845) \\
$0.20$ & 1.00\;(.905) & 1.00\;(.860) & 1.40\;(.795) & 2.02\;(.880) & 3.84\;(.785) \\
$0.25$ & 1.00\;(.905) & 1.00\;(.860) & 1.17\;(.735) & 1.78\;(.800) & 3.37\;(.735) \\
$0.30$ & 1.00\;(.905) & 1.00\;(.860) & 1.08\;(.710) & 1.54\;(.735) & 2.96\;(.700) \\
\midrule
\multicolumn{6}{l}{\textbf{MNIST}}  \\
\midrule
$0.05$ & 1.03\;(.970) & 1.08\;(.955) & 1.43\;(.920) & 1.60\;(.950) & 2.38\;(.965) \\
$0.10$ & 1.02\;(.960) & 1.02\;(.935) & 1.17\;(.860) & 1.34\;(.900) & 1.85\;(.865) \\
$0.15$ & 1.02\;(.960) & 1.02\;(.935) & 1.09\;(.815) & 1.22\;(.860) & 1.62\;(.780) \\
$0.20$ & 1.02\;(.960) & 1.02\;(.935) & 1.08\;(.810) & 1.17\;(.840) & 1.49\;(.745) \\
$0.25$ & 1.02\;(.960) & 1.02\;(.935) & 1.08\;(.805) & 1.14\;(.820) & 1.37\;(.695) \\
$0.30$ & 1.02\;(.960) & 1.02\;(.935) & 1.08\;(.805) & 1.14\;(.820) & 1.28\;(.655) \\
\bottomrule
\end{tabular}
\end{table}

\bibliographystyle{IEEEtran}
\bibliography{reference}

\end{document}